\DeclareMathOperator*{\diag}{diag}
\DeclareMathOperator*{\argmin}{arg\,min}
\DeclareMathOperator{\ASE}{ASE}
\DeclareMathOperator*{\ISVT}{SVT}
\DeclareMathOperator*{\NMF}{NMF}
\DeclareMathOperator*{\MSE}{MSE}
\newtheorem{xcondition}{Condition}
\DeclareMathOperator{\pdiag}{diag}
\declaretheorem[style=definition]{example}
\newtheorem{xthm}{Theorem}[section]
\newcommand{\blind}{0}
\title{Techniques for clustering interaction data as a collection of graphs}
\def\spacingset#1{\renewcommand{\baselinestretch}%
{#1}\small\normalsize} \spacingset{1}
\begin{document}
\if0\blind
{
	\title{\bf Techniques for clustering interaction data as a collection of graphs}
	\author{Nam Lee\thanks{nhlee@jhu.edu}, Carey Priebe, Youngser Park\hspace{.2cm}\\
		   Department of Applied Mathematics and Statistics\\
		    Johns Hopkins University \\
		    and \\
		   I-Jeng Wang\hspace{.2cm}\\
		   Applied Physics Lab \\ 
		   Johns Hopkins University \\
		    and \\
		   Michael Rosen \hspace{.2cm}\\
		   Armstrong Institute for Patient Safety and Quality \\
		   Johns Hopkins University}
	\maketitle
} \fi
\if1\blind
{
  \bigskip
  \bigskip
  \bigskip
  \begin{center}
    {\LARGE\bf Techniques for clustering interaction data as a collection of graphs}
\end{center}
  \medskip
} \fi
\bigskip
\begin{abstract}
A natural approach to analyze interaction data of form ``what-connects-to-what-when'' is to 
create a time-series (or rather a sequence) of graphs through 
temporal discretization (bandwidth selection) and spatial discretization
(vertex contraction).
Such discretization together with non-negative factorization techniques can be useful for obtaining clustering of graphs.  
Motivating application of performing clustering of graphs (as opposed to vertex clustering) can be 
found in neuroscience and in social network analysis, and it can also be used to 
enhance community detection (i.e., vertex clustering) by way of conditioning on the cluster labels. 
In this paper, we formulate a problem of clustering of graphs as a model selection problem. 
Our approach involves information criteria, non-negative matrix factorization and singular value thresholding, and we illustrate 
our techniques using real and simulated data.    
\end{abstract}
\noindent%
{\it Keywords:} High Dimensional Data, Model Selection, Network Analysis, Random Graphs
\section{Introduction}\label{sec:intro}
A typical data set collected from a network of actors is a collection 
of records of \emph{who-interacted-with-whom-at-what-time}, and  for 
network analysis, one often creates a sequence of graphs from such data.    
For a study of neuronal activities in a brain (c.f.~\cite{jarrell2012connectome}), the actors can be 
neurons.
For a study of contact patterns in a hospital in which potential disease transmission route is 
discovered (c.f.~\cite{10.1371/journal.pone.0073970}, \cite{GauvinPanissonCattuto}), the actors can be health-care professionals and patients in a hospital. 
In practice, transformation of the interaction data $\mathcal D$ to a time-series $\mathcal G$ of graphs  
uses temporal-aggregation and vertex-contraction, but there is no deep 
understanding of a proper way to perform such a transformation.  In this paper, 
we develop a model selection framework that can be applied to choose a transformation for interaction data, 
and develop a theory, on statistical efficiency of our model selection techniques in an asymptotic setting. 
In \cite{10.1371/journal.pone.0073970},
RFID wearable sensors were used to detect close-range interactions between individuals in 
a geriatric  unit of a hospital where health care workers and patients 
interact over a span of several days.   Then, for epidemiological analysis, it is examined whether or not
``the contact patterns were qualitatively similar from one day to the next''.
A key analysis objective there is identification of potential infection routes within the hospital .  
In this particular case, if there were two periods with distinct interaction patterns, then performing community detection 
on the unseparated graph can be inferior to performing on two separate graphs (See Example \ref{[exa:gclust-vclust]}).
In \cite{GauvinPanissonCattuto}, for a similar dataset describing the social interactions of students in a school, 
a \emph{tensor} factorization approach was used to detect the community structure, and to find an appropriate model to fit, 
the so-called ``core-consistency'' score from \cite{CEM:CEM801} was used.
For another example of such data set but in a larger scale, we utilize the data source called ``GDELT'' 
(Global Dataset of Events, Language, and Tone) introduced in \cite{leetaru2013gdelt},     
We follow the example below throughout this paper.  
\begin{example}\label{[exa:gdelt]}
GDELT  is continually updated by way of parsing news reports from various of news sources around the globe.  
The full GDELT data set contains
more than $200$ million entries of $(s,i,j,k)$-form spanning the periods from $1979$ to the present (roughly 12,900 days),
and the actors are attributed with $59$ features such as religions, organizations, location and etc. 
For more detailed description, we refer the reader to \cite{leetaru2013gdelt}.
The original data can be summarized in the following format:
\begin{align}
\mathcal D_T = \left\{ (s,i,j,k) : i,j \in V, s \in [0, T]\right\},
\end{align}
where $V=\{1,\ldots,n\}$ denotes `actors'
and $(s,i,j,k)$ denotes the event that `actor' $i$
perform type-$k$ action on `actor' $j$ at time $s$.  
In this paper, we will consider a subset of the data covering $48$ days of year $2014$. 
By aggregating the full data set \emph{by} day, and 
then by designating, say, $206$ actors according to their geo-political labels,  
we arrive at a time-series of graphs on $206$ vertices.  
More specifically, the particular discretization yields a sequence $\{G(t)\}$ of graphs, where each $G_{ij}(t)$ denotes 
the number of records with $(s,i,j,k)$, 
where $s$ belongs to $t$th day of $48$ days and $i$ and $j$ belong 
to one of $206$ geo-political labels.  
For each $t$, by applying Louvain algorithm (c.f.~\cite{1742-5468-2008-10-P10008}) for community detection to each $G(t)$, 
we can obtain a clustering $\mathcal C_{t}$ of $206$ vertices.  Then, for time $i$ and time $j$, we can compute the adjusted 
Rand index between $\mathcal C_{i}$ and $\mathcal C_{j}$.  When averaged across all pairs $i < j$, the mean value of the adjusted Rand index is slightly below $0.20$ (c.f.~Figure \ref{fig:1}). This suggests that for some pairs $(G(i),G(j))$ of graphs, 
the community structure of $G(i)$ and the community structure of $G(j)$ have a non-negligible overlapping feature.
A main question that we attempt to answer in this paper is whether or not a particular choice 
of discretization is efficient in some sense, i.e., 
to decide whether or not to further temporally aggregate $\{G(t)\}$ to \emph{a smaller collection of graphs}.  \hfill$\Box$
\end{example}
For another motivating example, 
consider the fact that interaction between $n$ neurons can be naturally modeled with graphs on $n$ vertices,  
where each edge weight is associated with the functional connectivity between neurons. 
Specifically, in \cite{jarrell2012connectome}, chemical and electrical neuronal pathways of C.~elegan 
worm were used to study the decision-making process of C.~elegan.  The area of studying a graph in such a way 
for further expanding our knowledge of biology is called ``Connectome''.  
While there is no ground truth answer because this is still a difficult science question,
we can still consider deciding whether or not combining two graphs into 
a single graph is more sensible with respect to a model selection principle.  
As a proxy, we follow in this paper a data example on Wikipedia hyperlinks, for which 
a more convincing but qualitative answer can be formulated for the same question.
\begin{example}\label{[exa:wiki]}
Wikipedia is an open-source Encyclopedia that is written 
by a large community of users (everyone who wants to, basically). 
There are versions in over 200 languages, with various amounts of content. 
Naturally, there are plenty of similarities between Wikipedia pages written in English (represented by 
an adjacency matrix $E$) and Wikipedia pages written in French (represented by $F$) since 
the connectivity between a pair of pages is driven by the relationship between 
topics on the pages.   Nevertheless, $E$ and $F$ are different since the pages in Wikipedia are 
grown ``organically'', i.e., there is no explicit coordination between English 
Wikipedia community and French Wikipedia community that try to enforce the 
similarity between $E$ and $F$.    Hence, 
the number of hyperlinks in $E$ and 
the number of hyper-links in $F$ 
 might be different owing to the fact that two graphs are
being updated/developed at a different rate. However, since both graphs are representation of the 
same underlying facts, there is a strong reason to believe that two Wikipedia graphs would have a ``nearly identical''
connectivity structure as the users continue to contribute.  Specifically, the adjusted Rand index value of 
the Louvain clustering of $E$ and the Louvain clustering of $F$ is slightly below $0.27$.  This suggests that 
the community structure of $E$ and the community structure of $F$ have a non-negligible overlapping feature.
\hfill$\Box$
\end{example}
One motivation behind selecting temporal discretization carefully rather than working with a single 
simply-aggregated graph is the potential benefit of conditioning-by-``graph label'' when performing 
community detection.  Community detection algorithms use the connectivity structure of a single 
graph for clustering vertices.  For multiple graphs, given that the cluster-labels are known, 
one can aggregate the graphs with the same label to a single graph with which one performs
community detection (c.f.~Example \ref{[exa:gclust-vclust]}).  
Instead of our approach in this paper, the tensor factorization approach from \cite{GauvinPanissonCattuto}
together the core-consistency score heuristic from \cite{CEM:CEM801} can also be used.   
However, the tensor factorization form, PARAFAC, considered in \cite{GauvinPanissonCattuto}
is not as flexible as the matrix factorization form that we consider in this paper, and 
the core-consistency score heuristic from \cite{CEM:CEM801} can be too subjective just as 
an elbow-finding strategy of the principle component analysis can be too subjective.  
Also, spatial aggregation, i.e., vertex contraction, arises naturally in many applications.
For example, when analysis of neuronal activities in a brain, a group of neurons are often 
identified as a single group as a function of their physical region in the brain.  There are many level of 
granularity that one can explore, but it is not clear which level of granularity is sufficient for statistically 
sound analysis.  
In this paper, we introduce model selection techniques that address these issues.  
To do this, the rest of this paper is organized as follows.  
In Section \ref{sec:backgmat}, we review some necessary  background materials.  
In Section \ref{sec:modeldescription}, we give a generative description of our model for multiple 
random graphs as a dynamic network. This gives a ground for formulating our 
model selection criterion later.    In Section \ref{sec:main.results}, we present our main contribution. 
Specifically, we present a model selection technique for clustering of graphs based on non-negative factorization,
singular value decomposition, and their relation to singular value thresholding. 
We also present a convergence criteria for non-negative factorization algorithms
based on a fixed point error formula, for comparing competing non-negative factorization algorithms.  
Throughout our discussion, we illustrate our approach with numerical 
experiments using real and simulated data.
\begin{figure}
\centering
\begin{subfigure}{0.45\textwidth}
\centering
\includegraphics[width=\textwidth]{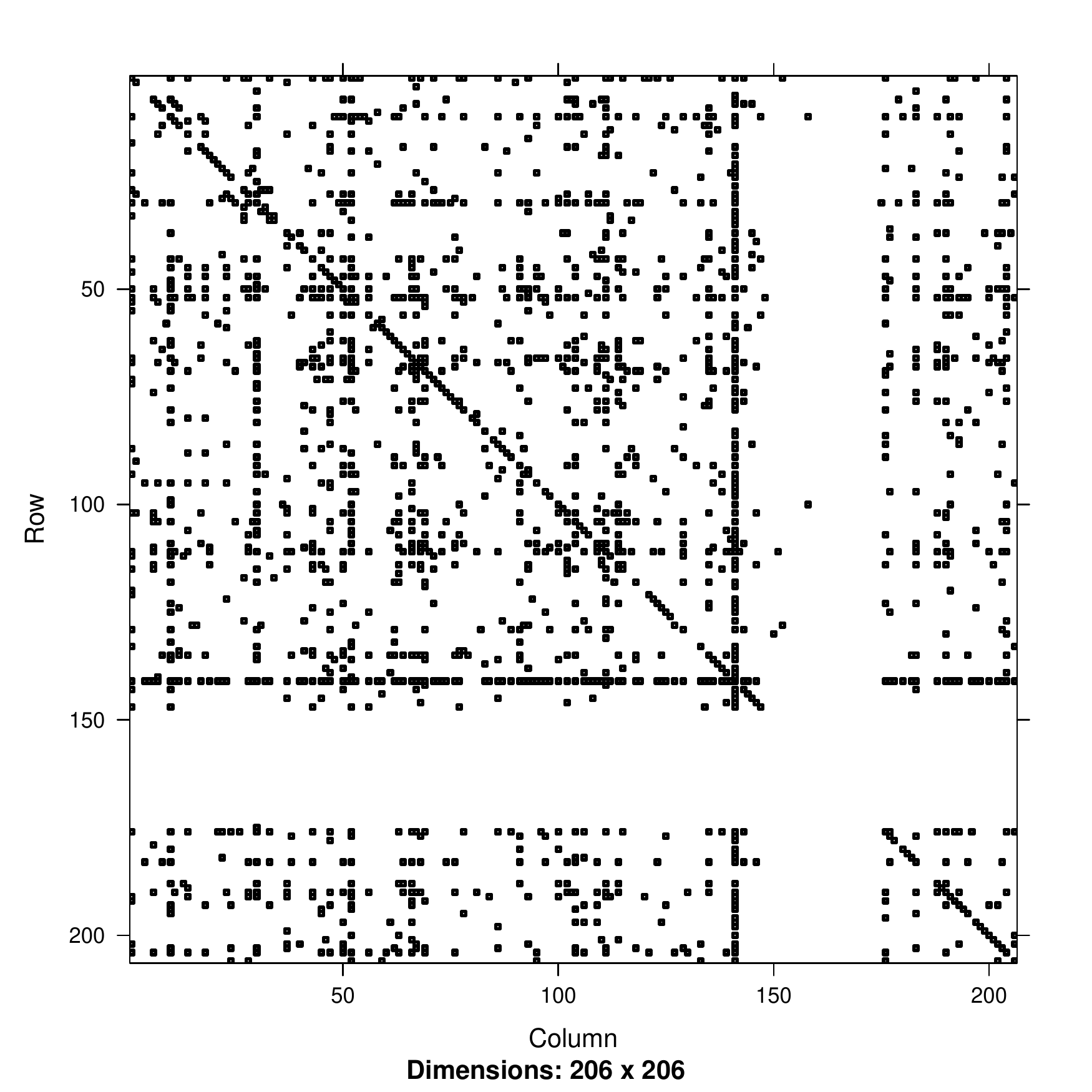}
\caption{$G(1)$}
\end{subfigure}
\qquad
\begin{subfigure}{0.45\textwidth}
\centering
\includegraphics[width=\textwidth]{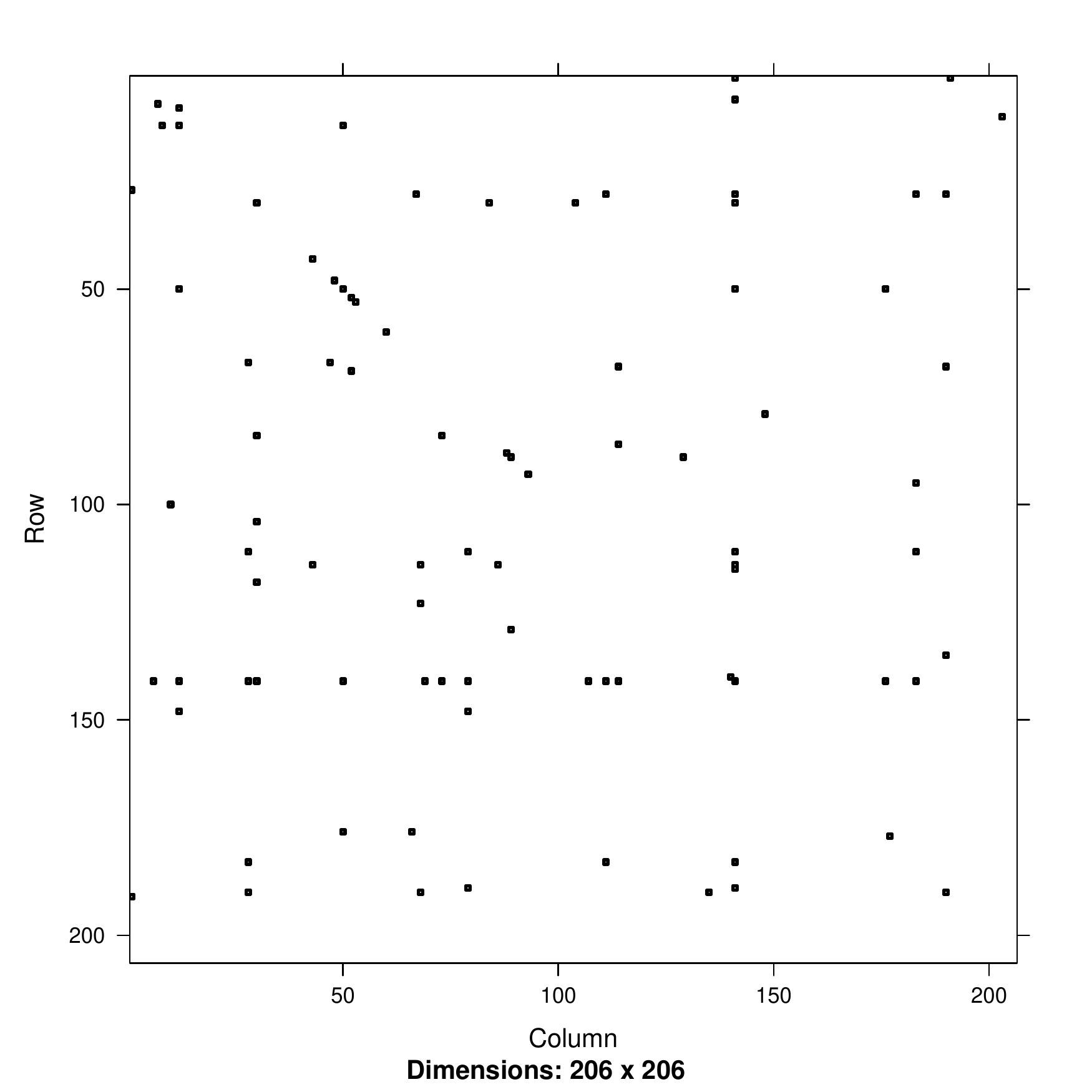}
\caption{$G(23)$}
\end{subfigure}\\
\begin{subfigure}{\textwidth}
\centering
\includegraphics[width=0.9\textwidth]{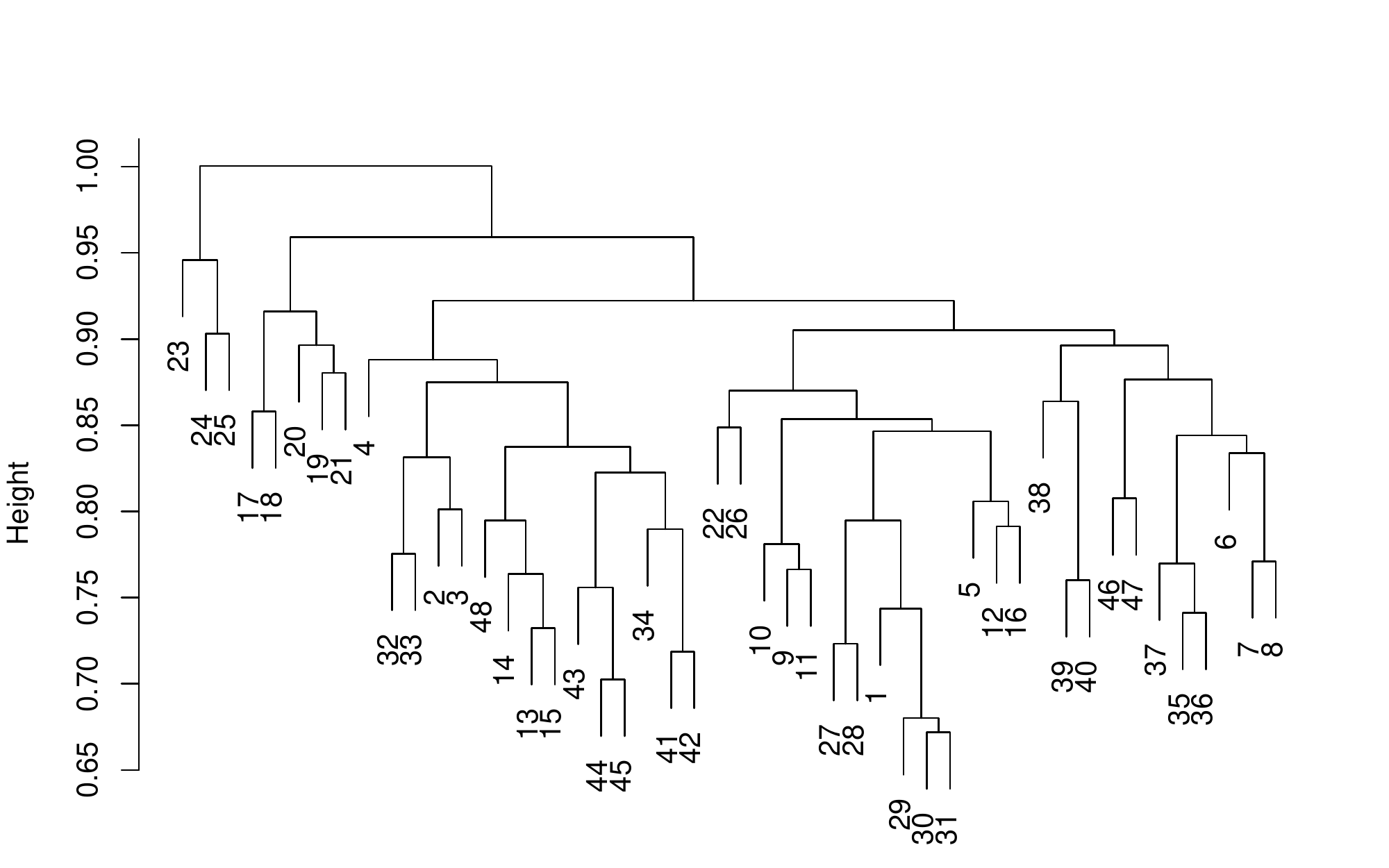}
\caption{Cluster Dendrogram}\label{fig1:c}
\end{subfigure}
\caption{In (a) and (b), two representative graphs from Example \ref{[exa:gdelt]} (GDELT data) are illustrated.  
In (c), a cluster dendrogram is constructed for visualization by using a dissimilarity matrix $D = (D_{ij})$, with  
$D_{ij} = \exp(-a_{ij})$, where for $i < j$, $a_{ij}$ denotes the adjusted Rand index between Louvain community detection clustering of $206$ vertices using 
$G(i)$ and Louvain community detection clustering of $46$ vertices using $G(j)$, and with $D_{ii} = 0$. When the tree were to be cut at height $0.96$, then $G(23), G(24), G(25)$ form one group
and the rest form the other. 
While it is clear that $G(1) \neq G(23)$, it can be argued that $G(23)$ is approximately a subgraph of $G(1)$.}\label{fig:1}
\end{figure}
\section{Background Materials}\label{sec:backgmat}
In this section, we briefly present necessary backgrounds, specifically, on random dot product graphs, 
adjacency spectral embedding, singular value thresholding and non-negative factorization.
First,  we review the dot product model for random graphs which can be seen to be a specific example of {\em latent position graphs} of 
\cite{Hoff2002}.   It can also be seen that the celebrated Erdos-Renyi random graph is an example of the 
random dot product model.           For any given $n \geq 1$, let ${Y}$
be a $n \times d$ matrix whose rows $\{Y_i\}_{i=1}^n$ are elements of $\mathbb R^{d}$.
The adjacency matrix $A$ of a random dot product graph
(RDPG) with {\em latent positions} ${Y}$ is
a random $n \times n$ symmetric non-negative 
matrix such that each of its entries takes a value in $\{0,1\}$ and 
$ \mathbf{P}[{A} | (Y_i)_{i=1}^{n}] =
  \prod_{i < j} (Y_i Y_j^\top)^{{A}_{ij}} (1 - Y_i
  Y_j^\top)^{1 - {A}_{ij}},$
where we assumed implicitly that each $Y_{i}$ takes a value in a subset $\mathcal S$ of $\mathbb{R}^{d}$ such that 
for each pair $\omega, \omega^\prime \in \mathcal S$, $0 \leq \langle \omega, \omega' \rangle \leq 1$. 
Given an $n\times d$ matrix of latent positions ${Y}$, the random dot product
model generates a symmetric (adjacency) matrix $A$ whose edges
$\{{A}_{ij}\}_{i <  j}$ are independent Bernoulli random
variables with parameters $\{{P}_{ij}\}_{i < j}$ where
${P} = YY^\top$.   As a slight generalization of this, we also consider a random dot product \emph{Poisson} graph, where we 
only require $YY^{\top}$ is a non-negative matrix and $A_{ij}$ is a Poisson random variable, and in this case, each $P_{ij}$ 
can take values in $[0,\infty)$ rather than $[0,1]$. 
Next, for an adjacency spectral embedding (ASE)
in $\mathbb R^d$ of (Bernoulli) graph $G$ (c.f.~\cite{STFP2012} and~\cite{AAetal-2013}), 
one begins by computing its singular value decomposition 
$USV^\top$ of $G$, where the singular values are placed in 
the diagonal of $S$ in a non-increasing order.  Note that given that $A$ is an $n\times n$ matrix,  
$1 \le d \le n$ and $d$ may or may not equal $n$.  
Then $ASE(G):=U_d S_d^{1/2}$, where 
$U_d$ is the first $d$ columns of $U$ and $S_d^{1/2}$ is an $d\times d$
diagonal matrix whose $k$th diagonal entry is the square root of the $k$th 
diagonal entry of $S$.  Provided that the generative model for 
$G$ is such that $\mathbf E[G|\xi] = \xi \xi^\top$,
under some mild assumptions, one can expect that clustering of 
the rows of $ASE(G)$ and clustering of the row vectors of $\xi$ 
coincide for the most part up to multiplication by an orthogonal matrix, and this is a useful fact when 
one perform clustering of vertices. A precise statement of the clustering error rate can be 
found in \cite{STFP2012,AAetal-2013}.
In \cite{STFP2012}, a particular choice for an estimate $\widehat d$ for $d$ was also motivated by way of 
an asymptotically almost-sure property of the singular values of a data matrix,
suggesting to take $\hat d$ to be the largest singular value of the matrix 
that is greater than $3^{1/4} n^{3/4} \log^{1/4}(n)$.
Next, the rank-$r$ singular value thresholding (SVT) of an $n\times m$ random matrix $M = U S V^{\top}$ 
is an estimate $\widehat M$ of $\mathbf E[M]$, and
$\widehat M = U_r S_r V_r^\top$, where $U_{r}$ and $V_r$ are the first $r$ 
columns of $U$ and $V$ respectively and $S_{r}$ is the first $r\times r$ upper sub matrix of $S$.
Specifically, if $M$ is symmetric, then $\widehat M = ASE(M)ASE(M)^\top$ when the embedding dimension 
$d$ of $ASE(M)$ coincides with the rank $r$. 
An error analysis expressed 
in terms of $\frac{1}{nm} \mathbf E[\|M-\widehat M\|_F^2]$
is given in \cite{CaiCandesShen2010} under some mild assumptions,
in an asymptotic setup in which $n\rightarrow\infty$. 
In \cite{Chatterjee2013}, a choice for $r$ was suggested yielding 
a so-called \emph{universal} singular value thresholding algorithm (USVT). 
In particular,
if it can be believed that there is no missing value, then, 
the USVT criterion chooses $r$ to be $\sqrt{(n\wedge m) \times 2.02}$, where $2.02$ can be replaced with 
any arbitrary number  greater than $2$. 
Lastly, implementation of our techniques will involve use of a non-negative factorization algorithm. 
For a non-negative matrix $M = L R$ for some full rank
non-negative matrices $L$ and $R$, 
finding the pair $(L, R)$ is known to be an NP-hard problem 
even if the value of $M$ is known exactly (c.f.~\cite{gaujoux2010flexible}). 
The number of columns of $L$ (and equivalently the number of rows of $R$) is said to be 
the \emph{inner dimension} of the factorization $M=LR$, and this terminology is regardless of 
the rank of $M$.    There are various algorithms 
for obtaining the factorization \emph{approximately} by numerically 
solving an optimization problem, e.g. among many choices,  one can take
\begin{align}
(\widehat L, \widehat R) := \argmin_{W\ge 0, H \ge 0} \| M - W H \|_F 
+ \alpha \|W\|_F + \beta \|H\|_F, \label{eqn:min-L2-error}
\end{align}
where $\alpha$ and $\beta$ are non-negative constants (c.f.~\cite{gaujoux2010flexible}).  
\section{Multiple graphs from a dynamic network}\label{sec:modeldescription}
We now introduce a generative model for multiple graphs, 
where each edge in a graph is generated/updated incrementally.  This allows us to model 
$\mathcal D$ as a data generated by multiple Poisson processes whose intensity 
functions are (potentially) inhomogeneous in time.  Rather than explicitly stating 
the form of the intensity functions, we allow our description of the discretized form
to implicitly specify the form of intensity functions.  
We consider a case that $n$ vertices generate $T$ graphs 
wherein $r$ repeated motifs are expressed.  To begin, we assume 
a partition of $[0,\tau] = [\tau_0, \tau_1) \cup \cdots \cup [\tau_{T-1},\tau_{T})$, 
where $\tau_0 = 0$, $\tau_{T} = T$ and $\tau_{i} < \tau_{i+1}$. 
Each event (from the underlying Poisson process) induces a record $(s,i,j)$, 
which should read ``at time $s$, 
interaction between vertex $i$ and vertex $j$ was needed''.
The $t$th graph is created by counting all update events occurred 
during interval $[\tau_{t-1},\tau_{t})$. For interval $[\tau_{t-1},\tau_{t})$, 
an update event occurs at a constant rate $\overline \Lambda_{tt}$, 
and then, each update event is attributed to the $k$th motif 
of $r$ (candidate) motifs with probability $\overline H_{k,t}$.  Subsequently, 
the update event is attributed to a particular vertex pair $(i,j)$ with 
probability $\overline W_{ij,k}$.   
Then, we arrive at a sequence of (potentially integrally-weighted) 
graphs $G(1),\ldots, G(T)$, where $G_{ij}(t)$ is the number of records of 
$(s,i,j)$ with $s \in [\tau_{t-1},\tau_{t})$. 
The data generated by such a network can be compactly written using 
an $n^2 \times T$ non-negative random matrix
$X$, where each $X_{\ell,t}$ represents the number of times that the $\ell$-th
ordered pair of vertices, say, vertex $i$ and vertex $j$, were needed for
an update event during interval $t=1,\ldots, T$.
In other words, $X$ is the matrix such that 
its $t$th column is a \emph{vectorized} version of $G(t)$, where 
the same indexing convention of vectorization is used for $G(1),\ldots, G(T)$.
Furthermore, by assumption, $\{X_{\ell,t}: \ell=1,\ldots, n^2, t=1,\ldots T\}$
are independent Poisson random variables such that for
some $n^2 \times r$ non-negative deterministic matrix $\overline W$,
$r \times T$ non-negative deterministic matrix $\overline H$, and $T\times T$ non-negative
deterministic diagonal matrix $\overline \Lambda$,
$
\mathbf E[X] = \overline W \overline H \overline \Lambda,
$
where we further suppose that
$\bm 1^\top \overline W = \bm 1^\top$ and $\bm 1^\top \overline H = \bm 1^\top$.
For each $t$, the total number $N(t)$ of events during time $t$ is
equal to $\bm 1^\top X e_t$, where $e_t$ denotes the standard basis vector
in $\mathbb R^T$ whose $t$th coordinate is $1$.
The random variables $N(1),\ldots, N(T)$ are then independent Poisson random variables,
and $\mathbf E[N(t)] = \overline \lambda_t := \overline \Lambda_{tt}$.
In general, $X$ is a noisy observation of $\overline X = \mathbf E[X]$.  As such, 
our problem of clustering of graphs requires finding the estimate $\widehat r$ of $r$, i.e.,
the number of repeated motifs and finding estimates $(\widehat W, \widehat H)$ of 
$(\overline W, \overline H)$ so that 
$
\overline X \approx \widehat W \widehat H \widehat \Lambda,
$
where $\approx$ reads ``is approximated with'' with respective to some loss criteria.  
When the number $n$ of vertices is sufficiently large, under certain simplifying
assumptions, a class of procedures known as singular value thresholding
(c.f.~\cite{CaiCandesShen2010} and \cite{Chatterjee2013}) 
can be used to effectively remove noise from random graphs in an $L_2$ sense, 
provided that each entry of $X$ is a bounded random variable.
We now conclude this section with the following three observations. 
\emph{First}, the values of $N(1),\ldots, N(T)$ are \emph{not} 
integral to our clustering of graphs.  Rather, we take $N(t)$ as the number  
of samples obtained for $t$th period, and for clustering, 
the object that we should focus is $\overline H$.  
\emph{Second}, while for our clustering 
of graphs, the columns of $\overline H$ subsume standard basis vectors, 
our description does not require to be such.  
However, for a model identification issue as well as 
efficiency of numerical algorithms for non-negative factorization, the restriction 
that the columns of $\overline H$ contains the full standard basis is,
while not necessary, critical (c.f.~\cite{uniqueNMF-HuangSS14}).  
\emph{Lastly}, the additivity property of Poisson random variables, i.e., the sum of independent  
Poisson random variable again being a Poisson random variable, greatly simplify our 
analysis involving temporal aggregation, and vertex-contraction, i.e., the operation which 
collapses a group of vertices to a single (super) vertex, aggregating their 
edge weights accordingly. 
\section{Main results}\label{sec:main.results}
\subsection{Overview}
In Section \ref{sec:SVT}, we introduce singular value thresholding 
as a key step in choosing vertex contraction. 
In Section \ref{sec:AIC}, we introduce a model selection
criteria for choosing the number of graph-clusters, under an asymptotic setting 
where the number of parameters grows.  In Section \ref{sec:FPE}, we introduce a numerical 
convergence criterion for non-negative factorization algorithm which quantifies the quality 
of $\widehat W$ and $\widehat H$ individually in addition to $\|X - \widehat W \widehat H\|_{F}$. 
Our  discussion assumes the following simplifying condition. 
\begin{xcondition}\label{cond:rank=innerdim}
A non-negative matrix $\overline X = \overline W \overline H \overline \Lambda$ is a rank $r$ matrix, and there exists
a unique non-negative factorization $\overline W \overline H$ with inner dimension $r$. \hfill$\Box$
\end{xcondition}
\subsection{On Denoising Performance of Singular Value Thresholding}\label{sec:SVT}
A temporal discretization policy determines the number $T$ of graphs, and a spatial discretization policy 
determines the number $n$ of vertices.  Subsequently, the number of parameters to estimate using data $\mathcal D$ then 
grows with the value of $\max\{n,T\}$.  As such, it is of interest to derive $X$ from $\mathcal D$ so that 
$\mathbf E[X]$ can be estimated from $X$ with a reasonable performance guarantee.  
One way to control the number of vertices in a graph is to perform vertex contraction. 
To be more specific, let $G$ be a graph on $n$ vertices, and then, let  $A := J  G J^\top$,
where $J$ is a partition matrix of dimension $m \times  n$.   That is, $\bm 1^{\top} J = \bm 1^{\top}$ 
and each entry of $J$ is either $0$ or $1$.  Essentially, the matrix $J$ acts on $G$ by aggregating 
a group of vertices to a single ``super'' vertex.  For simplicity, we assume that $(n/m)$ is the number of vertices in $G$ being contracted to a
vertex in $A=JGJ^\top$, whence $(n/m)^2$ is the number of entries in $G$ 
being summed to yield a value of an entry in $A$.  
Then, $A$ is a (weighted) graph on 
$m$ vertices. Next, let $\Delta$ be an $m\times m$ matrix such that 
for each vertex $u$ and vertex $v$,  $\Delta_{uv} := \frac{A_{uv} - \mathbf E A_{uv}}{\sqrt{\mathbf E A_{uv}}}$.
Finally, we may ``sketch'' $\Delta$ so as to further 
reduce the data to a smaller $p\times p$ matrix $\delta$.  More specifically, 
we take $\delta = S \Delta S^\top$, where $S$ is a $p\times m$ full rank matrix
such that each row is a standard basis in $\mathbb R^m$. 
We call $\delta$
the residual matrix, and when $\mathbf E A_{uv}$ is replaced with an estimate 
$\widehat A_{uv}$, we write $\widehat \delta$ and call an \emph{empirical} 
residual matrix for $\widehat A_{uv}$.   
For clustering of vertices to be meaningful, it is preferable to keep the value of $p$ large but 
to keep the total number of parameters to estimate in check, it is preferable to keep $p$ small enough.  
Keeping this in mind, to choose the vertex contraction matrix $J$, we propose to use the singular values of the empirical 
residual matrices $\{\widehat \delta(t)\}_{t=1}^T$,
where $\widehat \delta(t)$ is the empirical residual matrix for $G(t)$.  
Specifically, given $\widehat \delta = \widehat \delta(t)$ is a $p\times p$ matrix, 
we propose o compute its singular values  $\{\widehat \sigma_{\ell}\}$ of $\widehat \delta$, and 
then also to compute the expected singular values $\{\overline \sigma_{\ell}\}$ of a random matrix 
having the same dimension as $\widehat \delta$ whose entries are i.i.d.~standard normal random variables. 
Then, we propose to use 
$\MSE(\widehat\delta):=\sqrt{\frac{1}{p}\sum_{\ell=1}^{p} |\widehat \sigma_{\ell} - \overline \sigma_{\ell}|^{2}}$ 
to quantify the quality of the vertex contraction matrix $J$.
In Theorem \ref{thm:sparse2dense},
we identify an asymptotic configuration for a tuple $(p,m,n)$ under which a null distribution of $\delta$ is derived.
\begin{xthm}\label{thm:sparse2dense}
Let $(p_{n},m_{n},n)$ be such that  $p_{n} < m_{n} <  n$, $n/m_{n} \rightarrow\infty$ and $p_{n}^{2} m_{n}/n \rightarrow 0$ as $n \rightarrow\infty$. 
Suppose that $G$ be a random dot product \emph{Poisson} graph.
Suppose that exists $\gamma > 0$ such that
for all sufficiently large $n$, $\min_{uv} \mathbf E[A_{uv}]/(n/m)^{2} \ge \gamma$ and that $\max_{ij} \mathbf E[G_{ij}] <\infty $.
Then, as $n\rightarrow\infty$, the sequence of $\delta$ converges 
to a matrix of independent standard normal random variables.  
\end{xthm}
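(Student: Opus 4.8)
The plan is to exploit the Poisson structure that survives vertex contraction, reduce the statement to an entrywise central limit theorem with an explicit rate, and then assemble the $p_n\times p_n$ entries using independence. First I would record that each contracted entry is a block sum $A_{uv}=\sum_{i\in B_u,\,j\in B_v}G_{ij}$ over a block $B_u\times B_v$ of $(n/m)^2$ independent Poisson variables, so by the additivity of independent Poisson variables $A_{uv}$ is itself Poisson with mean $\lambda_{uv}:=\mathbf E[A_{uv}]=\sum_{i\in B_u,\,j\in B_v}\mathbf E[G_{ij}]$. The hypotheses pin down the scale of this mean: $\min_{uv}\mathbf E[A_{uv}]/(n/m)^2\ge\gamma$ gives $\lambda_{uv}\ge\gamma(n/m)^2$, while $\max_{ij}\mathbf E[G_{ij}]\le C<\infty$ gives $\lambda_{uv}\le C(n/m)^2$, so every $\lambda_{uv}\to\infty$ at the common rate $(n/m)^2\to\infty$. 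The key structural point is that the $p$ rows selected by $S$ pick out a $p\times p$ submatrix of $\Delta$ whose distinct entries correspond to disjoint blocks $B_u\times B_v$; since the entries of $G$ are independent, the selected entries of $\delta$ are mutually independent for every fixed $n$ (in the symmetric case the same holds for the $\binom{p+1}{2}$ upper-triangular entries).

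Next I would establish the marginal approximation with a quantitative rate. Writing $\delta_{ab}=\Delta_{uv}=(A_{uv}-\lambda_{uv})/\sqrt{\lambda_{uv}}$ as the standardized sum of $(n/m)^2$ independent Poisson summands with uniformly bounded means, I would invoke a Berry--Esseen bound in Wasserstein form (equivalently, Stein's method for the Poisson-to-normal approximation). Because each summand has variance equal to its mean $\le C$ and third absolute central moment bounded by a constant depending only on $C$, the summed third moments are $O((n/m)^2)$ while the total variance is $\lambda_{uv}\ge\gamma(n/m)^2$, so
\[
W_1\bigl(\mathcal L(\delta_{ab}),\,N(0,1)\bigr)\ \le\ \frac{C'}{\sqrt{\lambda_{uv}}}\ \le\ \frac{C'}{\sqrt{\gamma}}\,\frac{m}{n},
\]
uniformly over the selected pairs $(a,b)$, where $W_1$ denotes the Wasserstein distance.

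Finally I would assemble the joint statement using independence together with the subadditivity of $W_1$ (with the $\ell^1$ transport cost) across independent coordinates, which yields
\[
W_1\bigl(\mathcal L(\delta),\,\mathcal N_{p\times p}\bigr)\ \le\ \sum_{a,b} W_1\bigl(\mathcal L(\delta_{ab}),\,N(0,1)\bigr)\ \le\ \frac{C'}{\sqrt\gamma}\,p^2\,\frac{m}{n},
\]
where $\mathcal N_{p\times p}$ is the law of a $p\times p$ matrix of independent standard normals. By the hypothesis $p_n^2 m_n/n\to0$ the right-hand side tends to $0$, so $\delta$ converges to a matrix of independent standard normal random variables, as claimed; the remaining hypothesis $p_n<m_n<n$ is only needed to make contraction and sketching well-defined.

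The step I expect to be the main obstacle is obtaining the per-entry rate in the precise form $\lambda_{uv}^{-1/2}$ uniformly over entries and in a metric whose subadditivity over the $p_n^2$ independent coordinates reproduces \emph{exactly} the stated condition $p_n^2 m_n/n\to0$: a Kolmogorov (supremum) distance does not add up across coordinates, so I would commit to a transport distance and verify the Poisson--Stein bound and its constants carefully. A secondary nuisance is the symmetric case, where $\delta$ is forced to be symmetric and the limiting object is a symmetric matrix with independent upper-triangular entries rather than a fully i.i.d.\ matrix; the count of free entries changes by a factor of two, but the rate and the conclusion are unaffected.
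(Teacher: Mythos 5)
Your proposal is correct and takes essentially the same route as the paper: both arguments reduce the claim to an entrywise Berry--Esseen bound of order $m/n$ for the standardized Poisson block sums $A_{uv}$ (using $\lambda_{uv}\ge\gamma(n/m)^2$ and the bounded third moments coming from $\max_{ij}\mathbf E[G_{ij}]<\infty$), and then sum the per-entry rates over the $p^2$ independent entries, invoking $p_n^2 m_n/n\to0$. The only real difference is the metric --- the paper uses the Kolmogorov (sup-CDF) distance rather than $W_1$, and contrary to your stated worry this does accumulate across independent coordinates, since for product CDFs $\left|\prod_i a_i-\prod_i b_i\right|\le\sum_i|a_i-b_i|$ whenever $0\le a_i,b_i\le1$.
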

For an estimate $\widehat A$ of $\mathbf E[A]$ to be used in $\widehat \delta$, we propose to perform
singular value thresholding from \cite{Chatterjee2013}. 
While direct application of their theorems to
our present setting is not theoretically satisfactory as Poisson random variables have unbounded support,
an asymptotic result can be obtained. 
To state this in a form that we consider, take an $n^{2}\times T$ random matrix $X$ whose entries $(X_{ij,t})$ 
are independent Poisson random variables.  Specifically, each column of $X$ is a vectorization of graph on $n$ vertices. 
Given a constant $C>0$, for each $ij$ and $t$, let
$
Y_{ij,t} :=  X_{ij,t} \wedge C := \min\{X_{ij,t}, C\}.
$
Then, we let $\widehat Y$ be the result of the singular value threholding of $Y$ taking 
$\widehat r$ to be the number of positive singular value greater than $\sqrt{2.02 \min\{T,n^{2}\}}$.
Under various simplifying assumptions that the upper bound $C$ does not grow too fast with respect to the value of $T$ and $n$, it can be shown by adapting the proofs in  \cite{Chatterjee2013} that
\begin{align}
\lim_{T \wedge n \rightarrow\infty }\MSE(\widehat Y;X) = 0, \label{thm:meantotalinteraction}
\end{align}
where  
\begin{align*}
\MSE(\widehat Y;X) := \mathbf E\left[ \frac{1}{n^2T} \|\widehat Y - \mathbf E[X]\|_F^2\right].
\end{align*}
An appealing feature of this singular value thresholding procedure is that  in comparison to, say, 
a maximum likelihood approach, its computational complexity is relatively low  when $n^{2}$ and $T$ grow.  
Also, it can be post-processed with a maximum likelihood procedure if computational cost is not prohibitive. 
On the other hand, our discussion thus far on $\widehat Y$ 
relied on using the \emph{universal} singular value thresholding, and 
in next section, we touch  on the issue of refining this universal choice to a particular one. 
\subsection{Criteria based on Asymptotic Analysis of a Penalized Loss Method}\label{sec:AIC}
To motivate our discussion in this section, we begin with the following example 
which illustrates a reason why clustering of graphs might be relevant to performing community detection. 
\begin{example}\label{[exa:gclust-vclust]}
Let $\Pi$ be a permutation matrix corresponding to permutation $(264)(1)(3)(5)$.
Then, let $\widetilde M = L R$ and $\overline M = L \Pi (\Pi^{\top}L^{\top})$ where for $\kappa = 0.1$, 
\begin{align}
R := L^{\top} :=
\frac{1}{1+\kappa}
\begin{pmatrix}
\kappa & 1 & 1 & \kappa & 0 & 0 \\
1 & \kappa & 0 & 0 & \kappa & 1 \\
0 & 0 & \kappa & 1 & 1 & \kappa
\end{pmatrix}.
\end{align}
Treating $\widetilde M$ and $\overline M$ as adjacency matrix of weighted graphs,  
for $\widetilde M$, the (intended) vertex-clustering consists of $\{1,6\},\{2,3\},\{4,5\}$, but 
for $\overline M$, the (intended) vertex-clustering consists of 
$\{1,2\}, \{3,4\}, \{5,6\}$.  Now, let $\overline X$ be an $36 \times 10$ matrix such that each column of $
\overline X$ is the vectorization of either $\overline M$ or $\widetilde M$ and there are five from $\overline M$ 
and five from $\widetilde M$.  
In our numerical experiment, application of a community detection algorithm 
(Louvain)  applied to  $\overline M$ and $\widetilde M$ separately produced a correct clustering of vertices.
On the other hand,  when the aggregation is performed across the columns of $\overline X$ regardless of their graph-labels, 
the same community detection algorithm yields  $\{1,2\}, \{3,4\}, \{5,6\}$, hiding the clustering structure of $\widetilde M$. 
Our algorithm (see Algorithm \ref{algo:mainalgorithm} and  \ref{algo:mainalgorithm2} in Appendix 
\ref{sec:algorithms} for a sketch of the steps) finds the correct inner dimension of $\overline X$ and also finds the
correct clustering of the column of $\overline X$. 
\hfill$\Box$
\end{example}
\begin{table}
\centering
\caption{AICc values for Example \ref{[exa:gclust-vclust]}.  Each column of matrix $\overline X$ is
the vectorization of a non-negative matrix, say, $A_{t}$, which can either be $\widetilde M$ or $\overline M$.  Because performing vertex clustering on the aggregated matrix $\sum_{t} A_{t} = 5 \widetilde M + 5 \overline M$ can be inferior to performing vertex clustering on $\widetilde M$ and $\overline M$ separately, it is appealing to correctly identify the columns of $\overline X$ as a vectorization of either $\overline M$ or $\widetilde M$.  Application of an NMF procedure can produce the correct labels provided that we know the fact that the number of ``active'' patterns are two.}
\begin{tabular}{cccc}
 $\widehat r$ & Loss &  Penalty &      AICc \\ \hline
   1&      28.31480 & 0.0750000 &28.38980\\
   2&      24.93259 & 0.3000000 &{\bf 25.23259}\\
   3&      24.93251 & 0.7625668 &25.69507\\
   4&      24.91407 & 1.5360164 &26.45009\\ \hline
\end{tabular}
\end{table}
Our overall approach is a penalized maximum likelihood estimation.
In particular, 
our derivation of the penalty term in \eqref{eqn:penterm-aicc} is akin to the one in \cite{davies2006estimation}, in which 
for a linear regression problem, the penalty term is derived 
by computing the bias in the Kullback-Leibler discrepancy (c.f.~\cite[pg. 243]{linhart1986model}).   
To make it clear, we denote by $r^{*}$ the true inner dimension of $\overline X = \overline W \overline H \overline \Lambda$ factorization. 
We introduce an information criterion AICc as a part of our clustering-of-graphs technique.
Specifically, we choose $r$ by finding the minimizer of the mapping $r\rightarrow \text{AICc}(r)$. 
Our model-based information criterion 
(AICc) is obtained by appropriately penalizing the log-likelihood of the Poisson based 
model.   Specifically, we define the optimal choice for the number $r^*$ of cluster to be 
the smallest positive integer $r$ that minimizes
\begin{align}
\text{AICc}(r) & := - \sum_{ij,t} (\widehat W^{(r)} \widehat H^{(r)})_{ij,t} \log((\widehat W^{(r)} \widehat H^{(r)})_{ij,t}) \label{eqn:loglikterm-aicc}\\
& + \frac{1}{2} \sum_{k=1}^r (\widehat C_k^{(r)}-1)/\widehat Q_k^{(r)}, \label{eqn:penterm-aicc}
\end{align}
where $(\widehat W^{(r)}, \widehat H^{(r)})$ is such that
$\|M - \widehat W^{(r)} \widehat H^{(r)}\|_F^2
= \inf_{(W,H)} \|M - W H\|_F^2 + \alpha \|W\|_F + \beta \|H\|_F$
with $(W,H)$ ranging over ones such that
$\bm 1^\top W = \bm 1^\top \in \mathbb R^r$ and $\bm 1^\top H = \bm 1^\top \in \mathbb R^T$,
$\widehat Q_k^{(r)} = \sum_{t=1}^T N_t \widehat H_{kt}^{(r)}$,
$\widehat C_k^{(r)} = \sum_{ij} \bm 1\{\widehat W^{(r)}_{ij,k} > 0\}$.
Intuitively, as $r$ increases, 
the term in \eqref{eqn:loglikterm-aicc} is expected to 
decreases as the model space becomes larger, but 
the term in \eqref{eqn:penterm-aicc} is expected to increase for a larger value of $r > r^{*}$ especially when $\widehat W_{ij,k}^{(r)} >0$ and 
$\widehat W_{ij,k^\prime}^{(r)} > 0$ for $k\neq k^\prime$ for many values of $ij$ (in other words, when some columns of $\overline W$ are ``overly'' similar to each other, the penalty term becomes more prominent).
To begin our analysis, we consider a sequence of problems, where each problem is indexed 
by $\ell$ so that for example, we have a sequence of collections of 
$\mathcal G^{(\ell)} = \{G^{(\ell)}(t)\}_{t=1}^{T}$.  
\begin{xcondition}\label{condition:poisson}
Suppose that for each $t$, almost surely,
\begin{align}
\lim_{\ell\rightarrow\infty} N_t^{(\ell)}/\ell = \overline \lambda_t. 
\end{align}
\end{xcondition}
The dependence of $\mathcal G^{(\ell)}$ on $\ell$ is only through 
Condition \ref{condition:poisson}. 
Note that $\overline W$ and $\overline H$ do not depend on $\ell$
even under Condition \ref{condition:poisson}.
To simplify our notation,  we suppress the dependence of our notation  
on $\ell$   unless  it is necessary. 
Also, with slight abuse of notation, for each $k$, we write 
$\overline \lambda_k$ for the value of $\overline \lambda_t$ for the case that time-$t$ class label $k(t) = k$. 
Also, we let $\overline n_k = |\{t: \kappa(t) = k\}|$.
Let
$\varphi (W,H) := -\sum_{t}  \frac{1}{N_t} \mathbf E\left[ \sum_{ij} X_{ij,t} \log( (WH)_{ij,t})\left| \bm N \right. \right]$,
where \emph{$X$ is distributed according to one specified by the parameter $(\overline W, \overline H, \bm N)$}, and 
$W$ and $H$ are dummy variables. 
Note that each $\mathbf E[\varphi_{t}(\widehat W,\widehat H)\left| \bm N\right. ]$ is 
the expected overall KL discrepancy for the $t$th column of $X$, where 
$\varphi_{t}(W,H) := - \sum_{ij}  X_{ij,t} \log( (WH)_{ij,t})$.
Our next result shows that the connection between our AICc formula and $\varphi(\widehat W,\widehat H)$.
\begin{xthm}\label{thm:result/main}
Under Condition \ref{condition:poisson}, almost surely,
\begin{align}
\lim_{\ell\rightarrow\infty} \ell \left( 
\mathbf E[\varphi(\widehat W, \widehat H)\left| \bm N \right. ] - \varphi(\overline W, \overline H) \right) = 
\frac{1}{2} \sum_{k=1}^{r} \frac{\overline Z_k -1}{\overline n_k \overline \lambda_k},
\end{align}
where $\overline Z_k = \sum_{ij} \bm 1\{\overline W_{ij,k} > 0\}$, provided that 
$ \widehat W \widehat H  = X \diag(\bm 1^{\top} X)^{-1}$.
\end{xthm}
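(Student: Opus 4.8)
The plan is to recognize the left-hand side as the expected excess Kullback--Leibler discrepancy incurred by substituting the estimator $\widehat W \widehat H$ for the truth, and to evaluate it through a second-order expansion of the KL divergence about the true column laws. \emph{First}, I would simplify $\varphi$ when its arguments are deterministic. Writing $p_{ij,t} := (\overline W \overline H)_{ij,t}$ and using that, conditionally on $\bm N$, the column $\{X_{ij,t}\}_{ij}$ is multinomial with total count $N_t$ and cell probabilities $\{p_{ij,t}\}_{ij}$, so that $\mathbf E[X_{ij,t}\mid \bm N] = N_t\, p_{ij,t}$, the factor $1/N_t$ cancels and one obtains
\[
\varphi(W,H) = -\sum_t \sum_{ij} p_{ij,t}\,\log\bigl((WH)_{ij,t}\bigr),
\]
the summed cross-entropy between the true column laws and the model; in particular $\varphi(\overline W,\overline H) = -\sum_t\sum_{ij} p_{ij,t}\log p_{ij,t}$.

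\emph{Next}, reading $\varphi(\widehat W,\widehat H)$ in the predictive sense (the fitted model is frozen while the $X$ inside is an independent replicate of the same conditional law), the excess collapses to a sum of expected KL divergences,
\[
\mathbf E[\varphi(\widehat W,\widehat H)\mid\bm N] - \varphi(\overline W,\overline H)
= \sum_t \mathbf E\!\left[\KL\bigl(p_{\cdot,t}\,\big\|\,\widehat p_{\cdot,t}\bigr)\,\middle|\,\bm N\right],
\]
where $\widehat p_{\cdot,t}$ is the corresponding column of $\widehat W\widehat H = X\diag(\bm 1^\top X)^{-1}$. I would then Taylor-expand each term: setting $\delta := \widehat p_{\cdot,t} - p_{\cdot,t}$, the constraint $\sum_{ij}\delta_{ij}=0$ annihilates the first-order contribution, leaving $\KL(p_{\cdot,t}\|\widehat p_{\cdot,t}) = \tfrac12\sum_{ij}\delta_{ij}^2/p_{ij,t}+(\text{remainder})$ with the sum over the support $\{ij:p_{ij,t}>0\}$. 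Inserting the multinomial variances $\mathbf E[\delta_{ij}^2\mid\bm N] = p_{ij,t}(1-p_{ij,t})/N_t$ collapses the leading term to $(Z_t-1)/(2N_t)$, where $Z_t$ is the number of nonzero entries of the $t$th column, which equals $\overline Z_{k}$ whenever time $t$ carries class label $k$. Finally, invoking Condition \ref{condition:poisson}, namely $N_t^{(\ell)}/\ell\to\overline\lambda_{t}=\overline\lambda_k$ for $t$ in cluster $k$, I would group the $t$-sum by class label (there are $\overline n_k$ such columns), multiply by $\ell$, and pass to the limit; the bookkeeping must be matched carefully to the $1/N_t$ normalization and to whether columns sharing a label are pooled when forming $\widehat p$, which is precisely what converts the per-time counts into the factor $\overline n_k\overline\lambda_k$ appearing in the stated constant $\tfrac12\sum_k(\overline Z_k-1)/(\overline n_k\overline\lambda_k)$.

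The hard part will be remainder control together with the zero-count events. With positive probability $\widehat p_{ij,t}$ vanishes, on which $\log\widehat p_{ij,t}=-\infty$, so the expansion cannot be applied naively and these events must be shown to contribute $o(1/\ell)$ to the expectation. Concretely I would split the conditional expectation over $\{\min_{ij}\widehat p_{ij,t}\ge c\}$ and its complement, bound the logarithmic remainder by a cubic in $\delta$ and control it by the third and fourth multinomial moments (which are $o(N_t^{-1})$), and dominate the bad-event contribution using multinomial concentration together with the uniform lower bound on cell probabilities supplied by the $\gamma$-type hypothesis and the at-most-logarithmic growth of $\log$. This is what legitimizes exchanging $\lim_\ell$, the factor $\ell$, and $\mathbf E[\cdot\mid\bm N]$, and thereby delivers the almost-sure limit.
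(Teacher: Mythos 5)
Your plan retraces the paper's own proof almost step for step: interpret the excess as a sum of expected KL discrepancies, expand the logarithm to second order about the true column probabilities (the first-order term vanishing --- pathwise via $\sum_{ij}\delta_{ij}=0$ in your version, in conditional expectation via unbiasedness of $\widehat\theta = X\diag(\bm 1^\top X)^{-1}$ in the paper's; these are the same computation), insert the multinomial variances $\overline\theta_{ij,t}(1-\overline\theta_{ij,t})/N_t$ to obtain $\sum_t (\overline Z_{\kappa(t)}-1)/(2N_t)$, control the remainder by third moments, and pass to the limit using Condition \ref{condition:poisson}. Up to that point you and the paper coincide.

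However, the two issues you defer as ``the hard part'' cannot be closed by the devices you propose, and they are precisely where the paper's own argument is defective. First, the zero-count events: your instinct is right that they need attention (the paper ignores them entirely), but no splitting into good and bad events can show they contribute $o(1/\ell)$, because whenever $X_{ij,t}=0$ for a cell with $\overline\theta_{ij,t}>0$ --- an event of strictly positive conditional probability for every finite $N_t$ --- one has $\varphi(\widehat W,\widehat H)=+\infty$, hence $\mathbf E[\varphi(\widehat W,\widehat H)\mid \bm N]=+\infty$ exactly; the statement itself must be modified (smooth or truncate the estimator, or condition on all relevant cells being positive) before any expansion is legitimate. Also, the ``uniform lower bound on cell probabilities supplied by the $\gamma$-type hypothesis'' that you invoke is not available here: $\gamma$ is an assumption of Theorem \ref{thm:sparse2dense}, not of this theorem. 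Second, the $\overline n_k$ bookkeeping you flag is not bookkeeping but a genuine discrepancy: with the estimator exactly as hypothesized (column-wise empirical frequencies, no pooling across the $\overline n_k$ columns sharing a label), your own second-order term has limit
\begin{align*}
\lim_{\ell\rightarrow\infty}\ \ell \sum_{t=1}^{T}\frac{\overline Z_{\kappa(t)}-1}{2N_t}
= \sum_{k=1}^{r}\frac{\overline n_k\,(\overline Z_k-1)}{2\,\overline\lambda_k},
\end{align*}
with $\overline n_k$ in the \emph{numerator}, which is not the stated constant $\frac12\sum_k(\overline Z_k-1)/(\overline n_k\overline\lambda_k)$. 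The paper bridges this gap by regrouping $\sum_{t\in k}1/N_t$ as $\bigl(\sum_{s\in k}N_s\bigr)^{-1}\sum_{t\in k}\bigl(\sum_{s\in k}N_s\bigr)/N_t$ and asserting that the second factor tends to $1$; under Condition \ref{condition:poisson} that factor in fact tends to $\overline n_k^{\,2}$, so the paper's step is false. Your hesitation (``whether columns sharing a label are pooled'') is therefore warranted, but pooling does not rescue the stated constant either: a pooled estimator with the discrepancy still summed per time point yields $\sum_k(\overline Z_k-1)/(2\overline\lambda_k)$; only pooling \emph{and} counting the discrepancy once per cluster produces the theorem's right-hand side. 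In short, your approach is the paper's approach, and the points at which you could not complete it are points at which the theorem, as stated, cannot be completed without amendment.
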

For each $t$, $Xe_t$ is a complete and sufficient statistic for $N_{t}$ independent multinomial trials 
whose success probability is specified by $\overline W e_{\kappa(t)}$.  Similarly and trivially, the data matrix $X$
constitutes  a complete and sufficient statistic for $\sum_{t} N_{t}$ trials whose success probability is 
$\overline W \overline H$. Then, our AICc is a function of $X \pdiag(1/N_{1},\ldots, 1/N_{T})$, more specifically, 
a function of a complete and sufficient statistic.
Then, by Lehman-Scheffe,
if the expected value of AICc were \emph{identical} to the expected \emph{weighted}  overall KL discrepancy, then 
our AICc would be an \emph{uniformly minimum variance unbiased estimator} (UMVUE) 
of the expected \emph{weighted}  overall KL discrepancy.   This motivates our formula for AICc.  
Next, we illustrate using AICc for model selection using real data examples. 
\begin{table}
\centering
\caption{AICc values for Example \ref{[exa:gdelt:aic]}.  
The optimal AICc value suggests that there are two clusters, where times $t=23, 24, 25$  are 
to be aggregated and times $t=1,\ldots, 22, 26, \ldots, 48$  are to be aggregated
}\label{tab:gdelt-aggregation}
\begin{tabular}{cccc}
$\widehat r$ & Loss &  Penalty &      AICc \\ \hline
     1  &      346.2536 & 0.01233593 & 346.2659 \\
      2  &      342.6821 & 0.16133074 & {\bf 342.8434} \\
      3  &      342.5578 & 0.59156650 & 343.1493 \\ 
     4    &  342.7041 &1.22363057 &343.9277 \\ \hline
\end{tabular}
\end{table}
\begin{example}[Continued from Example \ref{[exa:gdelt]}]\label{[exa:gdelt:aic]}
Our discussion here reiterates our result reported in Table \ref{tab:gdelt-aggregation}. 
Our clustering-of-graphs procedure (Algorithm \ref{algo:mainalgorithm} and Algorithm \ref{algo:mainalgorithm2}) performed 
on $\{G(t)\}_{t=1}^{48}$ picks $\widehat r = 2$ as the best model inner dimension,  
yielding cluster $\{G(t)\}_{t=23,24,25}$ and cluster $\{G(t)\}_{t\neq23,24,25}$.  This result corresponds to 
cutting the tree in Figure \ref{fig1:c} at height $0.96$.  Performing the clustering procedure again on 
 graph $A = \sum_{t=23,24,25} G(t)$ and  graph $A^{\prime} =\sum_{t\neq 23,24,25} G(t)$ 
yields that  the AICc value of $23.01160$ for $\widehat r = 1$ and  the AICc value 
of $23.16797$ for $\widehat r = 2$.  In words, this can be attributed to the facts that 
(i) $A^{\prime}$ is nearly a subgraph of $A$, and that (ii) $A^{\prime}$ is sparse, i.e., relatively small number of non-zero entries. 
This can be used to suggest performing clustering of vertices on the aggregated graph 
$A^{*} = \sum_{t=1}^{48} G(t)$.
\hfill$\Box$
\end{example}
\begin{example}[Continued from Example \ref{[exa:wiki]}] 
We apply our approach to decide whether or not two graphs are from the same ``template''.
We take the data matrix $X$ to be a matrix such that the first column of $X$ is the vectorization of $E$ 
(English Wikipedia  graph), and the second column of $X$ is the vectorization of $F$ (French Wikipedia graph).
Then, we can decide whether the inner dimension of $X$ is $1$ or $2$ using the AICc criterion. 
Our computation yields the AICc value of $22.05825$ for $r=1$ and the AICc value of $23.20715$ for $r=2$.
Therefore, our analysis suggests that both graphs have the same connectivity structure. 
\hfill$\Box$
\end{example}
\subsection{A Fixed Point Error Convergence Criterion for an NMF algorithm}\label{sec:FPE}
Our formulation of the AICc in \eqref{eqn:loglikterm-aicc} need not depend on a particular choice 
of non-negative factorization algorithm.  In particular, in \eqref{eqn:loglikterm-aicc}, we stated our 
formula using a modified ``Lee-Seung'' algorithm that minimizes $L_{2}$ error with $L_{1}$ regularizers
(c.f.~\cite{gaujoux2010flexible}).
On the other hand, there are many other options that can take its place, namely, ``Brunet'' algorithm (c.f.~
\cite{gaujoux2010flexible}). Then, one can ask if one is better than the other in some sense. 
In this section, we provide a way to compare these competing choices. 
For a non-negative matrix $\overline X$, which need not be symmetric, and with an approximate factorization $\overline X \approx W H$ with 
its inner dimension $r$, we write  $\varepsilon(W;\overline X) := \|\bm F(W,H)\|_F$ and $\varepsilon(H; \overline X) := \|\bm G(W,H)\|_F$,
where $\overline X = \overline U \overline \Sigma \overline V^\top$ is a singular value decomposition 
of $\overline X$, and 
\begin{align}
& \bm F(W,H) := W -  \overline X H^\top W^\top (\overline U \overline \Sigma^{-2} \overline U^\top) W, \label{eqn:FWH}\\
& \bm G(W,H) := H^\top - \overline X^\top W H (\overline V \overline \Sigma^{-2} \overline V^\top) H^\top.\label{eqn:GWH}
\end{align}
When $\overline X=LR$ is an exact NMF of $\overline X$, given that the rank of $\overline X=LR$ is also the rank of the non-negative matrix $L$,
it can be shown that the pair $(L,R)$ is the only solution to the fixed point equation $(\bm F(W,H), \bm G(W,H)) = \bm 0$.   
\begin{example}[Continued from Example \ref{[exa:gclust-vclust]}]\label{[exa:cont]}
It can be shown that 
for each $\kappa \in [0,0.5)$, the non-negative matrix $\overline X :=\overline W \overline H$ is uniquely 
non-negative matrix factorizable and for $\kappa \in [0.5,1]$,
that $\overline X = \overline W \overline H$
is not uniquely non-negative matrix factorizable (c.f.~\cite{uniqueNMF-HuangSS14}).
In Figure \ref{fig:Example-Part3}, for $\kappa =0.1$, we compared 
two non-negative factorization algorithm using our fixed point error formula, and found that ``Brunet'' algorithm is more 
appealing than 
the modified ``Lee-Seung'' algorithm in terms of a convergence characteristic of $\varepsilon(H;\overline X)$.   
\hfill$\Box$
\end{example}
\begin{figure}
\centering
\begin{subfigure}[b]{0.48\textwidth}
\centering
\includegraphics[width=\textwidth]{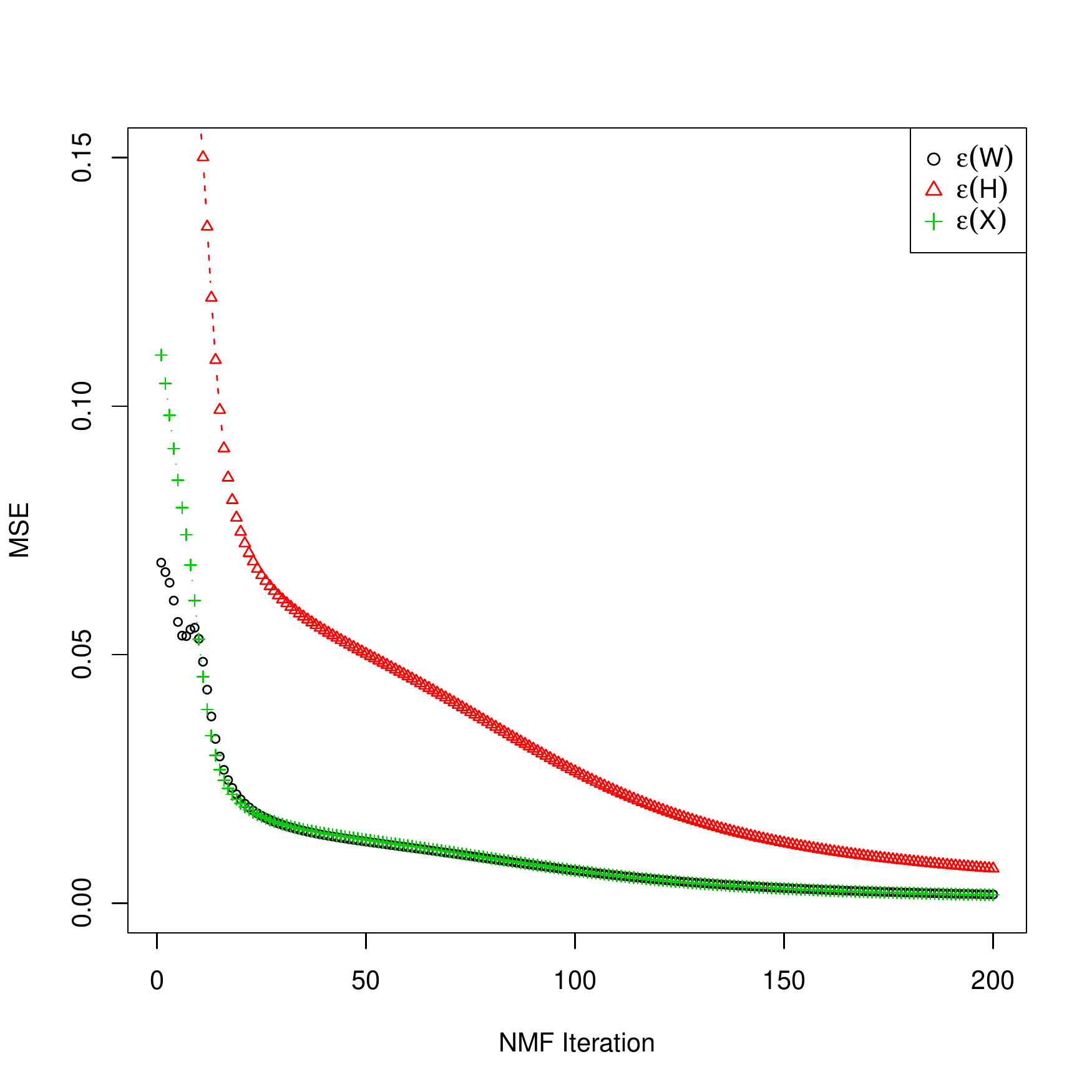}
\caption{Lee-Seung ($L_{2}$)}
\end{subfigure}
\begin{subfigure}[b]{0.48\textwidth}
\centering
\includegraphics[width=\textwidth]{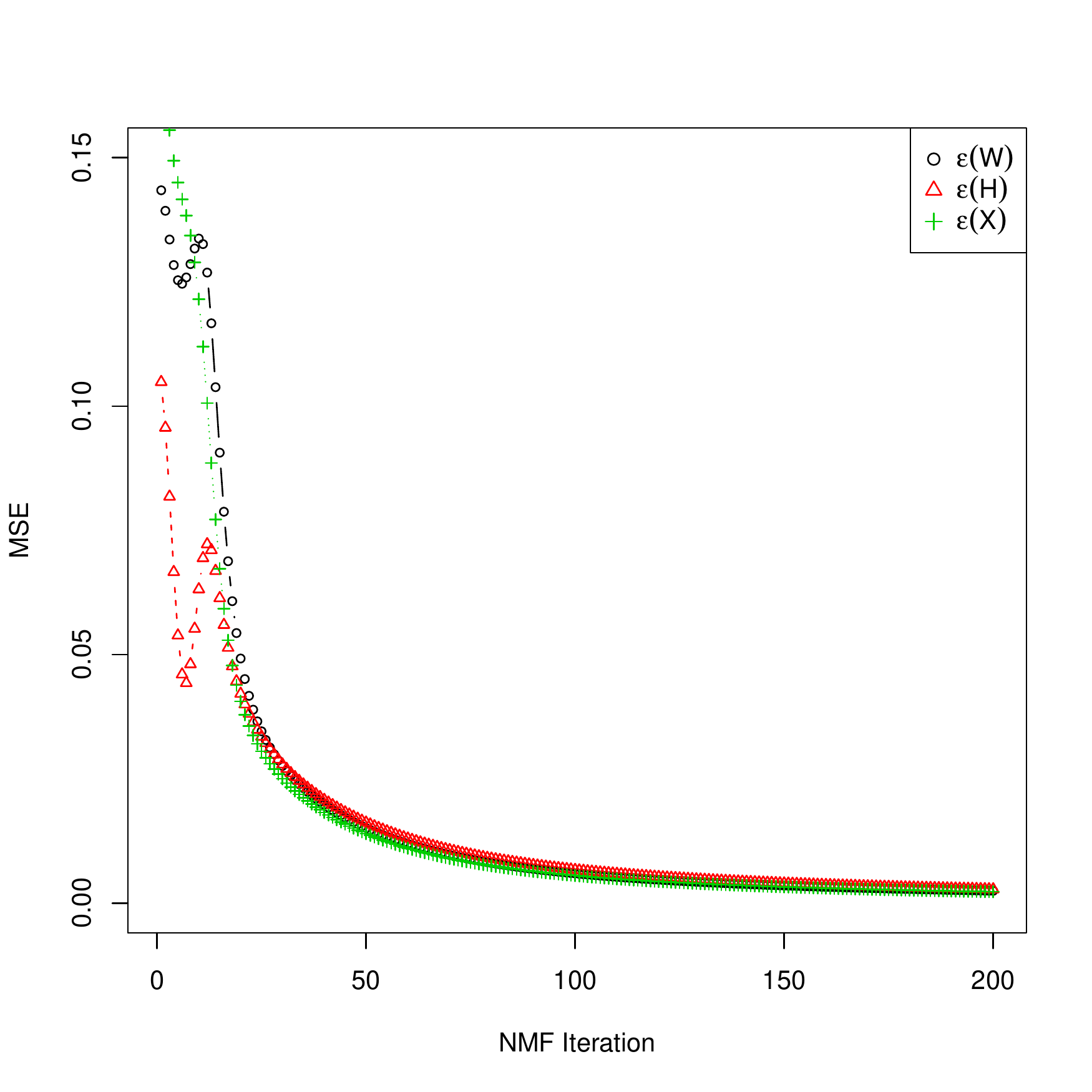}
\caption{Brunet (KL)}  
\end{subfigure}
\caption{Comparison of two NMF algorithms. For $\varepsilon(H;\overline X)$, ``Brunet'' outperforms ``Lee-Seung'' 
for this particular instance.  For illustration, 
the $1/6$ of the value of $\varepsilon(X;\overline X)$ is used instead.
}\label{fig:Example-Part3}
\end{figure}
We now discuss robustness of the fixed point error criteria for NMF.
We do this by exploring a connection between using of a non-negative factorization algorithm for clustering of vertices
(c.f.,~\cite{uniqueNMF-HuangSS14} and~\cite{gaujoux2010flexible}), 
and using adjacency spectral embedding for clustering of vertices (c.f.~\cite{AAetal-2013}).   
To begin, let $A$ be an $n\times n$ random matrix such that $\overline P=\mathbf E[A|Y] = Y Y^\top$, where
the rows $\{Y_i\}_{i=1}^n$ of $Y$ form a sequence of independent 
and identically distributed random probability vectors, i.e., $Y \bm 1 = \bm 1$,
and conditioning on $Y$, each $A_{ij}$ is an independent Bernoulli random variable.  
We write $\widehat Y$ for $\ASE(A)$. 
Non-negative factorization connects to the random dot product model by 
a simple observation that even when $n\times r$ matrix $Y$ is not non-negative, if $\overline P = Y Y^\top$ is a non-negative matrix with rank $r$, then
there exists an $n\times r$ non-negative matrix $W$ such that $\overline P = W W^\top$.   
Our next condition in Condition \ref{cond:ASEmeetsNMF} is a stronger version of this observation, and we assume
Condition \ref{cond:ASEmeetsNMF} to simplify our proof in  Theorem \ref{thm:randmat}.   Also, recall that $\widehat Y = \ASE(A)$ using the rank $r$.
\begin{xcondition}\label{cond:ASEmeetsNMF}
Suppose that for each $n$, 
there exists an orthogonal matrix $Q$ such that 
\begin{align}
\widehat Y Q =  \widehat W  := \argmin_{W \in \mathbb R_+^{n\times r}} \sum_{i<j} \left| \widehat P_{ij} - e_i^\top W W^\top e_j\right|^2,
\label{eqn:sym-nmf}
\end{align}
where $\widehat P = \widehat Y \widehat Y^{\top}$.
\end{xcondition}
  Now, given an estimate $\widehat W$ of $W$, it is often of interest to quantify how close $\widehat W$ is to $W$.  
While in practice, if $\|\overline P - \widehat W \widehat W^{\top}\|_{F} \approx 0$, then we expect 
$\widehat W$ to be close to $W$, but there is no way to know how close $\widehat W$ is to $W$.  
Our fixed point error formula $\varepsilon_n(\widehat W; \overline P)$ 
addresses this issue.  
The following technical condition is a key assumption in \cite{AAetal-2013}.
Our analysis relies on the main result in \cite{AAetal-2013}.    
\begin{xcondition}\label{cond:RowVectorDistinctEigenValue}\label{cond:ConditionNumber}
The distribution of $Y_i$ does not change with $n$, and 
the $r\times r$ second moment matrix $\Delta := \mathbf E[Y_i^\top Y_i]$
has distinct and strictly positive eigenvalues. 
Moreover, there exists a constant $\xi_0 < \infty$ such that almost surely,
for all $n$, 
\begin{align}
\overline \Sigma_{11}/ \overline \Sigma_{rr} \le \xi_0.
\end{align}
\end{xcondition}
\begin{xthm}\label{thm:randmat}
Under Condition \ref{cond:RowVectorDistinctEigenValue} and  \ref{cond:ASEmeetsNMF},
almost surely, 
\begin{align}
\limsup_{n\rightarrow\infty} \frac{ \varepsilon_n(\widehat W; \overline P)}{\sqrt{\log(n)}} < \infty.
\label{eqn:oracle-rate}
\end{align}
\end{xthm}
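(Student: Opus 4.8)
The plan is to reduce the fixed-point residual to two mutually orthogonal pieces and control each by the spectral consistency of adjacency spectral embedding. Write the (symmetric, rank-$r$) eigendecomposition $\overline P = \overline U\,\overline\Sigma\,\overline U^\top$ of $\overline P = YY^\top$, and split the estimate into its in-range and out-of-range parts, $\widehat W = \overline U\widehat B + \widehat W_\perp$, where $\widehat B := \overline U^\top\widehat W$ and $\widehat W_\perp := (I - \overline U\,\overline U^\top)\widehat W$. Taking $H=\widehat W^\top$ in \eqref{eqn:FWH} (the symmetric factorization $\widehat P = \widehat W\widehat W^\top$ forced by Condition \ref{cond:ASEmeetsNMF}) and using the identities $\overline P\,\widehat W = \overline U\,\overline\Sigma\,\widehat B$ and $\widehat W^\top(\overline U\,\overline\Sigma^{-2}\overline U^\top)\widehat W = \widehat B^\top\overline\Sigma^{-2}\widehat B$, I would simplify
\begin{align*}
\bm F(\widehat W,\widehat W^\top) = \overline U\bigl(\widehat B - \overline\Sigma\,\widehat B\widehat B^\top\,\overline\Sigma^{-2}\widehat B\bigr) + \widehat W_\perp .
\end{align*}
Since the two summands live in orthogonal subspaces, $\varepsilon_n(\widehat W;\overline P)^2 = \|\widehat B - \overline\Sigma\,\widehat B\widehat B^\top\,\overline\Sigma^{-2}\widehat B\|_F^2 + \|\widehat W_\perp\|_F^2$, so it suffices to bound the two terms separately.

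For the in-range term I would set $\Xi := \widehat B\widehat B^\top - \overline\Sigma = \overline U^\top(\widehat P - \overline P)\overline U$ (using $\widehat P = \widehat W\widehat W^\top$), which yields the clean cancellation $\widehat B - \overline\Sigma\,\widehat B\widehat B^\top\,\overline\Sigma^{-2}\widehat B = -\,\overline\Sigma\,\Xi\,\overline\Sigma^{-2}\widehat B$. Condition \ref{cond:RowVectorDistinctEigenValue}, through the law of large numbers $\tfrac1n Y^\top Y \to \Delta$, forces every eigenvalue in $\overline\Sigma$ to be $\Theta(n)$ with bounded condition number, so $\|\overline\Sigma\|_2=\Theta(n)$, $\|\overline\Sigma^{-2}\|_2=\Theta(n^{-2})$ and $\|\widehat B\|_2=O(\sqrt n)$; hence this term is $O(n^{-1/2})\,\|\widehat P - \overline P\|_F$. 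For the perpendicular term I would use that $\widehat W$ and $\widehat Y=\ASE(A)$ share the top-$r$ eigenspace $\widehat U$ of $A$, so $\widehat W_\perp = (I-\overline U\,\overline U^\top)\widehat U\,\widehat\Sigma^{1/2}Q$, whence a Davis--Kahan $\sin\Theta$ bound gives $\|(I-\overline U\,\overline U^\top)\widehat U\|_F = O\!\bigl(\|A-\overline P\|_2/\overline\Sigma_{rr}\bigr)$ while $\|\widehat\Sigma^{1/2}\|_2=O(\sqrt n)$.

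The final step inserts the almost-sure concentration bounds underlying the analysis of \cite{AAetal-2013}: for the Bernoulli RDPG one has $\|A-\overline P\|_2 = O(\sqrt{n\log n})$ and $\|\widehat P-\overline P\|_F = O(\sqrt{n\log n})$ almost surely. Substituting, the in-range term is $O(n^{-1/2}\sqrt{n\log n}) = O(\sqrt{\log n})$, and the perpendicular term is $O\!\bigl((\sqrt{n\log n}/n)\sqrt n\bigr)=O(\sqrt{\log n})$. Both are therefore $O(\sqrt{\log n})$, so $\varepsilon_n(\widehat W;\overline P) = O(\sqrt{\log n})$ almost surely, which is exactly \eqref{eqn:oracle-rate}.

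The hardest part will be Step three and its interface with \cite{AAetal-2013}: I must check that the orthogonal matrix $Q$ furnished by Condition \ref{cond:ASEmeetsNMF} may be taken to coincide with (or absorbed into) the Davis--Kahan alignment, and that the cited consistency results can be invoked as genuine \emph{almost-sure} statements carrying exactly one power of $\sqrt{\log n}$ rather than an in-probability or higher-log-power bound. The delicate bookkeeping on which the stated rate hinges is tracking the polynomial-in-$n$ factors so that the $\Theta(n)$ eigenvalues and their inverses cancel cleanly, leaving precisely $\sqrt{\log n}$ with no residual polynomial growth.
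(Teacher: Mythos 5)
Your proposal is correct, but it takes a genuinely different route from the paper's proof. The paper works with the operator $\xi := \overline P\widehat P^\top(\overline U\,\overline\Sigma^{-2}\overline U^\top)$ and expands $e_n = \widehat U - \xi\widehat U$ multiplicatively in the differences $\widehat U - \overline U$, $\widehat V - \overline V$ and $I - \widehat\Sigma\,\overline\Sigma^{-1}$, producing a sum of terms each bounded by norm submultiplicativity together with Proposition 4.5 and Theorem 4.6 of \cite{AAetal-2013}, which give $\|\widehat U - \overline U\|_F = O(\sqrt{\log n / n})$ and control the singular-value ratios via $\xi_0$; the $\sqrt{\log n}$ rate emerges from the single dominant term $(\sqrt n + r)\|\widehat U - \overline U\|_F$. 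You instead split $\widehat W$ orthogonally along $\pspan(\overline U)$, exploit the exact cancellation reducing the in-range part to $-\overline\Sigma\,\Xi\,\overline\Sigma^{-2}\widehat B$ with $\Xi = \overline U^\top(\widehat P - \overline P)\overline U$, and handle the out-of-range part by a Davis--Kahan bound plus spectral concentration of $\|A - \overline P\|_2$; your algebra (the cancellation, the Pythagorean split, the order-of-magnitude bookkeeping $\|\overline\Sigma\|_2 = \Theta(n)$, $\|\widehat B\|_2 = O(\sqrt n)$) checks out. Your route buys transparency: the error splits exactly, the role of $\widehat P - \overline P$ is explicit, and the argument rests on generic tools rather than the specific propositions of \cite{AAetal-2013}. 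The paper's route buys directness: the cited bounds already come with the eigenbases aligned, so no separate $\sin\Theta$ step is needed, and the dependence on $r$, $\delta_r$, $\xi_0$ is tracked explicitly. The two worries you flag are both resolvable and should not block the proof. The orthogonal matrix $Q$ of Condition \ref{cond:ASEmeetsNMF} is harmless: it enters only to the right of expressions whose Frobenius or spectral norm you take, and $\widehat P = \widehat W\widehat W^\top = \widehat Y\widehat Y^\top$ is $Q$-free, so no reconciliation with the Davis--Kahan alignment is required. The almost-sure upgrade of the in-probability bounds does require a Borel--Cantelli argument (e.g., take $\varepsilon = n^{-2}$ in the cited bounds, which costs only a constant inside $\sqrt{\log(n/\varepsilon)}$) --- a step which, you may note, the paper's own proof also elides when it passes from ``with probability $1-\varepsilon$'' to ``almost surely.''
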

To put Theorem \ref{thm:randmat} into a perspective, we note that 
the largest value that  $(\varepsilon_{n}(\widehat W;\overline P))^{2}$ can take is 
$\sum_{i=1}^{n} \sum_{k=1}^{r} 1 = n r$ and $n/{\log(n)} \rightarrow\infty$ as $n\rightarrow\infty$. 
Specifically, almost surely, the 
``mean-square'' error will converges to zero, i.e., $
\limsup_{n\rightarrow\infty} \frac{1}{\sqrt n} \varepsilon_n(\widehat W;\overline P)  =0$. We also note that the 
result in \eqref{eqn:oracle-rate} is of a ``oracle'' type because  the value of $\overline P$ is unknown in practice. 
\section{Numerical Results}\label{sec:num.expr}
\label{sec:num.expr:sim}
We now examine performance of our AICc criteria using simulated data.  
We specify the general set-up for our Monte Carlo experiments. 
To begin, let  
\begin{align*}
\overline B^{(1)} :=
\begin{pmatrix}
0.1  & 0.045 & 0.015 & 0.19 & 0.001\\
0.045 & 0.05 & 0.035 & 0.14 & 0.03\\
0.015 & 0.035 & 0.08 & 0.105 & 0.04\\
0.19 & 0.14 & 0.105 & 0.29 & 0.13\\
0.001    & 0.03 & 0.04 & 0.13 & 0.09
\end{pmatrix}.
\end{align*}
Then, we set $\overline B^{(2)}$ to be the matrix obtained from 
$\overline B^{(1)}$ by permuting the rows by the permutation $(4152)$
and then by permuting the columns by the permutation $(43)$.  
Our specific choice for $\overline B_{uv}^{(1)}$ is motivated by 
the experiment data from \cite{Izhikevich04032008} in which ``Connectome'' is 
constructed to answer a biological question.
We consider random graphs on $n$ vertices such that  each $\mathbf E[G(t)]$
has a block-structured pattern, i.e., a checker-board like pattern.  For each $t=1,\ldots, T$, 
we take $G(t)$ to be a (weighted) graph on $n$ vertices, where each $G_{ij}(t)$ is a Poisson random variable. 
To parameterize the block structure, we set $n = 5 \times m$, and   
let $\kappa(t)$ be a deterministic label taking values in $\{1,2\}$. 
Then, we take $\mathbf E[G_{ij}(t)] = B_{uv}^{(\kappa(t))}$ for some $u$ and $v$.  
Specifically, the $(i,j)$th entry of $\mathbf E[G(t)]$ is taken to be $B_{v}^{(\kappa(t))}$ if
$i=5(u-1)+p$ and $j=5(v-1)+q$ for some $p,q=1,\ldots, m$.    
Our problem is then to estimate the number $r$ of clusters 
using data $G(1),\ldots, G(T)$, and the correct value for $\widehat r$ is $r=2$. 
We keep ${\overline\Lambda}_{11}  = \ldots = {\overline \Lambda}_{TT} > 0$,
so that there should not be any statistically-significant evidence in the total number of edges 
in the graph that will distinguish one cluster from another.  
For comparison,
we specify two other algorithms against which we compare our model selection procedure
(AICc o nmf), where o denote composition of two algorithms.   Our choices for two competing methods 
are based on an observation that our analysis of model selection procedure heavily relies on the fact that 
the rank and the inner dimension are the same. In our case, both the rank and the inner dimension of $\overline X$ 
is $2$, one way to estimate the value of $r$ is to use any algorithm for finding the number of non-zero singular values.
We denote our first baseline algorithm with (pamk o dist) and the second with (mclust o pca).  
These competing algorithms are often used in practice for choosing the rank of a (random) matrix. 
For (pamk o dist), we first compute  
the distance/dissimilarity matrix using pair-wise Euclidean/Frobenius distances between graphs,
and perform  \emph{partition around medoids} for clustering (c.f.~\cite{dudaHart1973}).
For (mclust o pca),
we first compute the singular values of the data matrix $X$ and use 
an ``elbow-finding'' algorithm to determine the rank of the data matrix (c.f.~\cite{ZhuGhodsi2006})
\begin{figure}
\centering
\includegraphics[height=3.5in]{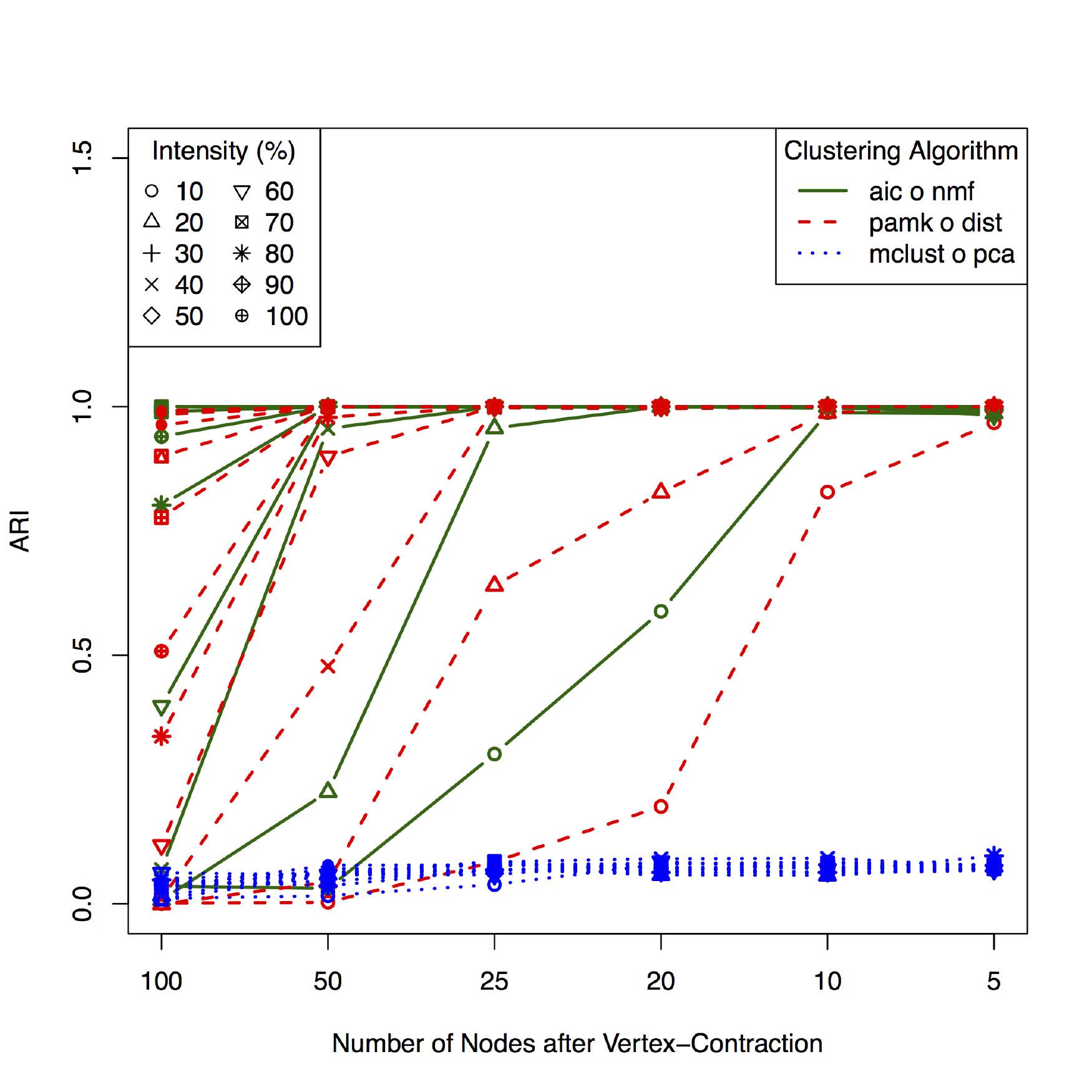}
\caption{Comparison of three  approaches through ARI for  the model selection performance.  
In all cases, our procedure either outperforms or nearly on par with the two baseline algorithms. 
}\label{fig:ARIasNodeSparsity}
\end{figure}
The result of our  experiment is summarized in Figure \ref{fig:ARIasNodeSparsity}.
In all cases, our procedure either outperforms or nearly on par with the two baseline algorithms. 
There are two parameters that we varied, the level of intensity and the level of aggregation. 
Parameterizing the level of intensity, for $\rho \in (0,1)$, we take 
$\overline \Lambda_t^{(\rho)} = \rho \overline \Lambda_t$, where a bigger value for $\rho$ means more chance 
for each entry of $X$ taking a large integer value. 
For the level of aggregation (or equivalently, vertex-contraction), 
if the number of nodes after vertex-contraction is $5$, the original graph is 
reduced to a graph with $5$ vertices.  Aggregation of edge weights is only done 
within the same block.  
Then, as the performance index, we use 
the adjusted Rand index (ARI) values (c.f.~\cite{WMRand1971}).  In general, ARI takes a value in $[-1,1]$. 
The cases in which the value of ARI is close to $1$ is ideal, indicating that clustering is consistent with the truth, and 
the cases in which the value of ARI is less than $0$ are the cases in which its performance is worse than randomly assigned clusters. 
\section{Conclusion}
In this paper,  we have considered a clustering problem that arises
when a collection of records of interaction is transformed into a time-series of graphs through discretization.   
Taking multiple graphs generated with any discretization scheme as a 
starting point, our work has addressed a question of whether or not the particular 
choice of discretization produces an efficient clustering of the multiple graphs.  In order to 
quantify the efficiency, we introduced a model selection criteria as a way to  
choose the number of clusters (c.f.~Theorem \ref{thm:result/main}).  
For choosing an appropriate non-negative factorization algorithm,  we have studied a fixed point formula 
as a convergence criteria for (numerical) non-negative factorization (c.f.~Theorem \ref{thm:randmat}).   
Throughout our discussion, our techniques are illustrated using various datasets.  In particular, along with 
Theorem \ref{thm:sparse2dense}, we have demonstrated,
through numerical experiments (c.f.~Section \ref{sec:num.expr}),
that choosing an appropriate vertex contraction can improve performance of our model selection techniques 
(c.f.~Algorithms \ref{algo:mainalgorithm} and \ref{algo:mainalgorithm2} outlined in Appendix F). 
The problem of choosing an appropriate vertex contraction  is still an open problem, and we consider this 
in our future work.
\subsection*{Supplementary Materials}
A collection of R codes and data that are used in this paper is stored in the following (temporary-for-review) web location:
\begin{center}
\url{https://www.dropbox.com/sh/r3q38lh5v5r6oel/AAAcmAZksvuczURVLi96ZYbVa?dl=0}
\end{center}
\if0\blind
{
\subsection*{Acknowledgment}
{This work is partially supported by Johns Hopkins University
Armstrong Institute for Patient Safety and Quality and
the XDATA program of the Defense Advanced Research
Projects Agency (DARPA) administered through
Air Force Research Laboratory contract FA8750-12-2-0303.  We also 
like to thank Runze Tang for useful comments throughout various stages of 
drafting.}
} \fi
\appendix
\section{Proof of Theorem \ref{thm:sparse2dense}}
\begin{proof}
To simplify our notation, we suppress the dependence of our notation on $n$. 
For example, we write $p, m$ for $p_{n},m_{n}$ respectively.
We denote by $\Phi$ the usual (cumulative) distribution function of a normal random variable and 
denote by $F_{uv}$  the cumulative distribution of $\Delta_{uv}$.
Note that each $A_{uv}$ is again a Poisson random variable.  
To be concrete, for each $u,v$, we let $\{\xi_{uv}(\ell)\}_{\ell=1}^{\infty}$ be a sequence of independent and identically distributed 
Poisson random variables such that $A_{uv} = \sum_{\ell=1}^{\tau_{uv}} \xi_{uv}(\ell)$,
where $\tau_{uv} := (n/m)^{2}$.
By assumption, $ (n/m)^{2} \rightarrow\infty$  as $n\rightarrow\infty$.
Now, by central limit theorem, we see that 
$\Delta_{uv}$ converges in distribution to a standard normal random variable.  
In fact, by Berry-Essen inequality 
\begin{align}
\max_{u,v=1,\ldots,m} \sup_{s\in \mathbb R} |\Phi(t) - F_{uv}(t)| 
\le & \max_{u,v=1,\ldots,m} \frac{C}{\sqrt{\mathbf E[A_{uv}}]}  \max_{\ell=1}^{\tau_{uv}} \frac{\mathbf E[(\dot \xi_{uv}(\ell))^{3}]}{\mathbf E[ \xi_{uv}(\ell)]} \\
\le & \max_{u,v=1,\ldots,m} \frac{C}{\sqrt{\tau_{uv} \mathbf E[A_{uv}]/\tau_{uv}}}  \max_{\ell=1}^{\tau_{uv}}  \frac{\mathbf E[(\dot \xi_{uv}(\ell))^{3}]}{\mathbf E[ \xi_{uv}(\ell)]} \\
\le & \max_{u,v=1,\ldots,m} \frac{C}{\sqrt{\tau_{uv}}\sqrt{ \gamma }}   \beta
\end{align}
where $\dot \xi_{uv}(\ell) :=\xi_{uv}(\ell)- \mathbf E[\xi_{uv}(\ell)]$,
$\gamma := \min_{uv} \mathbf E[A_{uv}]/\tau_{uv} $, 
$\beta := \max_{\ell,uv} \frac{\mathbf E[(\dot \xi_{uv}(\ell))^{3}]}{\mathbf E[ \xi_{uv}(\ell)]}$.
By assumption, we have that $0 < \gamma <\infty$ and $\beta < \infty$, 
and hence, there exists $C^\prime < \infty$ such that 
\begin{align}
\max_{u,v=1,\ldots,m} \sup \left|\Phi(s) - F_{uv}(s) \right| 
\le 
\frac{C^\prime}{ n/m },
\end{align}
and $n/m $ diverges as $n\rightarrow \infty$.  
Now, since $p^{2}(m/n)\rightarrow 0$, then the rate of convergence of $\delta_{uv}$ to a standard normal 
is uniform in index $uv$ in the following sense:
\begin{align*}
\lim_{n\rightarrow\infty}\sum_{u,v=1}^{p}\max_{u,v=1,\ldots,m} \sup \left|\Phi(s) - F_{uv}(s) \right| 
\le 
{C^\prime}  \lim_{n\rightarrow\infty} p^{2 }(m/n)= 0.
\end{align*}
This completes our proof. 
\end{proof}
\section{Proof of Theorem \ref{thm:result/main}}
\begin{proof}
For simplicity, we write $\overline \theta := \overline W \overline H$ and  
for any $\widehat W$ and $\widehat H$, we write $\widehat \theta = \widehat W \widehat H =  X \diag(\bm 1^{\top} X)^{-1}$. 
We have 
\begin{align*}
\varphi(\widehat W,\widehat H) 
= - \sum_{ij,t} \mathbf E[X_{ij,t}\left| \bm N \right. ]/N_t \log( (\widehat W \widehat H)_{ij,t})
= - \sum_{ij,t} \overline \theta_{ij,t} \log( \widehat \theta_{ij,t}).
\end{align*}
First, by way of a Taylor expansion of the $\log$ function, we note
\begin{align}
\mathbf E[\varphi(\widehat \theta)  \left| \bm N\right. ] 
= \varphi(\overline \theta) 
& - \mathbf E\left[ \sum_{ij,t} \overline\theta_{ij,t} \bm
1\{\overline\theta_{ij,t} > 0\} \frac{1}{\overline\theta_{ij,t}} (\widehat\theta_{ij,t} -
\overline\theta_{ij,t})\left| \bm N \right.  \right]\label{eqn:biascorrection-1} \\
& - \mathbf E\left[ \sum_{ij,t}
\overline\theta_{ij,t} \bm 1\{\overline\theta_{ij,t} > 0\} \frac{-1}{2\overline\theta_{ij,t}^2}
(\widehat\theta_{ij,t} - \overline\theta_{ij,t})^2 \left| \bm N \right.  \right] \label{eqn:biascorrection-1.5}\\
& -  \mathbf E[R(\widehat \theta_{ij,t}, \overline \theta_{ij, t}) \left| \bm N\right. ].\label{eqn:biascorrection-2}
\end{align}   
We will come back to the term $R(\widehat \theta_{ij,t}, \overline \theta_{ij, t})$, and we focus on the first two terms first.  
Since $\widehat \theta$ is an unbiased estimator of $\overline \theta$, we see that the first
term on the right in \eqref{eqn:biascorrection-1} vanishes to zero.  For the
 term in \eqref{eqn:biascorrection-1.5}, we note that since each $X_{ij,t}$
is a binomial random variable for  $N_t$ trials with its success probability
$\overline \theta_{ij,t}$, we see that
\begin{align}
    &\quad -\sum_{ij,t} \bm 1\{\overline\theta_{ij,t} > 0\} \frac{-1}{2\overline\theta_{ij,t}} \mathbf E[(\widehat\theta_{ij,t} - \overline\theta_{ij,t})^2\left| \bm N \right. ] \\
    & = \sum_{ij,t} \bm 1\{\overline\theta_{ij,t} > 0\} \frac{1}{2\overline\theta_{ij,t}} \frac{1}{N_t} \overline\theta_{ij,t} (1- \overline\theta_{ij,t}) \\
    & = \sum_{ij,t} \bm 1\{\overline\theta_{ij,t} > 0\} \frac{1}{2N_t} (1- \overline\theta_{ij,t}) \\
    & = \sum_{t=1}^T \frac{1}{2N_t} \left(\sum_{ij} \bm 1\{\overline\theta_{ij,t} >0\}\right) - \sum_{t=1}^T \frac{1}{2N_t} \left(\sum_{ij} \bm 1\{\overline\theta_{ij,t} > 0\} \overline\theta_{ij,t}\right) \\
    & =  \sum_{t=1}^T \frac{\overline Z_{\kappa(t)}}{2N_t} - \sum_{t=1}^T \frac{1}{2N_t},\label{eqn:biascorrection-3}
\end{align}
where the last equality is due to the fact 
that each column of $\overline \theta$ sums to one.  Hence, in summary, we 
see that 
\begin{align}
\lim_{\ell\rightarrow\infty} 
\ell (\mathbf E[\varphi(\widehat \theta)\left| \bm N \right. ] - \varphi(\overline \theta))
= \lim_{\ell\rightarrow\infty} \ell\sum_{t=1}^T \frac{\overline Z_t-1}{2N_t}. \label{eqn:AICc/proof/part/one}
\end{align}
Next, we note that in general,
\begin{align*}
\sum_{t=1}^T \frac{\overline Z_t}{N_t}
= \sum_{k=1}^r\sum_{t\in k} \frac{\overline Z_t}{N_t} 
= \sum_{k=1}^r {\overline Z_{t_k}} \sum_{t\in k}\frac{1}{N_t} 
= \sum_{k=1}^r 
\left(\frac{ \overline Z_{t_k} }{ \sum_{t \in k} N_{t} } 
\left(\sum_{t\in k}\frac{1}{N_t/\sum_{s \in k} N_{s}}\right)\right),
\end{align*}
where we write $t\in k$ for $\overline H_{kt} = 1$ for simplicity. Then,
\begin{align}
 \lim_{\ell\rightarrow\infty} \frac{1}{1/\sum_{s \in k} N_{s}} \sum_{t\in k}\frac{1}{N_t} 
=\lim_{\ell\rightarrow\infty} \sum_{t\in k}\frac{1}{(N_t/\ell)/\sum_{s \in k} (N_{s}/\ell)}
= \sum_{t\in k}\frac{1}{\overline\lambda_t/\sum_{s \in k} \overline \lambda_{s}} = 1,
\end{align}
where the last equality is due to the fact that for each $k$, $\{G(t):t\in k\}$ are identically distributed. 
Since $\lim_{\ell\rightarrow\infty} \sum_{t \in k} N_{t}/\ell
= \overline n_k \overline \lambda_k$, 
\begin{align}
\lim_{\ell\rightarrow\infty} \ell \sum_{t=1}^T \frac{\overline Z_t}{N_t}
=
\lim_{\ell\rightarrow\infty} \ell \sum_{k=1}^r \frac{ \overline Z_{t_k} }{ \sum_{t \in k} N_{t} } 
= 
  \sum_{k=1}^r \frac{ \overline Z_{t_k} }{ \lim_{\ell\rightarrow\infty} \sum_{t \in k} N_{t}/\ell }
=  \sum_{k=1}^r \frac{ \overline Z_{t_k} }{\overline n_k\overline \lambda_k},
    \label{eqn:AICc/proof/part/two}
\end{align}
where $t_{k}$ is any fixed $t \in k$.
We now turn to the remainder term $\mathbf E[R(\widehat \theta, \overline \theta)\left|\bm N\right.]$.
Specifically, 
\begin{align}
R(\widehat \theta, \overline \theta)
 = \sum_{k=3}^{\infty} \frac{1}{k}\frac{(-1)^{k+1}}{\overline \theta_{ij,t}^{k}}|\widehat \theta_{ij,t} - \overline \theta_{ij,t}|^{k}
 = \frac{-1}{\overline \theta_{ij,t}^{3}} (\widehat \theta_{ij,t} - \overline \theta_{ij,t})^{3} 
\sum_{k=0}^{\infty} \frac{(-1)^{k+1}}{\overline \theta_{ij,t}^{k}} \frac{(\widehat \theta_{ij,t} - \overline \theta_{ij,t})^{k}}{k+3}.
\end{align}
Hence, 
\begin{align}
\ell |R(\widehat \theta, \overline \theta)| 
\le  \frac{\ell}{\overline \theta_{ij,t}^{3}} |\widehat \theta_{ij,t} - \overline \theta_{ij,t}|^{3} 
\sum_{k=0}^{\infty} \frac{1}{\overline \theta_{ij,t}^{k}} \frac{|\widehat \theta_{ij,t} - \overline \theta_{ij,t}|^{k}}{k}.
\end{align}
Since $\widehat \theta_{ij,t} \rightarrow \overline \theta_{ij,t}$ almost surely,    
it can be shown that there exists a constant $c > 0$ such that for each sufficiently small $\varepsilon > 0$, 
for sufficiently large $\ell$,
with $1-\varepsilon$ probability,
$$
\sum_{k=0}^{\infty} \frac{1}{\overline \theta_{ij,t}^{k}} \frac{|\widehat \theta_{ij,t} - \overline \theta_{ij,t}|^{k}}{k} \le c.
$$
Moreover, using the third moment formula for a binomial random variable explicitly, we have 
\begin{align}
&\qquad \lim_{\ell\rightarrow\infty}\ell \mathbf E[ |\widehat \theta_{ij,t} - \overline \theta_{ij,t}|^{3}\left| \bm N \right.  ] \\
&\le 
\lim_{\ell\rightarrow\infty} \ell \frac{1}{N_{t}^{3}}  N_{t}
\overline \theta_{ij,t} (1-\overline \theta_{ij,t}) (1- 2 \overline \theta_{ij,t}) \\
&\le 
\lim_{\ell\rightarrow\infty} \frac{1}{N_{t}/\ell}
\overline \theta_{ij,t} (1-\overline \theta_{ij,t}) 
\lim_{\ell\rightarrow\infty}   \frac{1- 2 \overline \theta_{ij,t}}{N_{t}} = 0.
\end{align} 
In summary, $\lim_{\ell\rightarrow\infty }\ell\mathbf E[|R(\widehat \theta, \overline \theta)|\left| \bm N\right.] = 0$.  
Combining with \eqref{eqn:AICc/proof/part/one} and \eqref{eqn:AICc/proof/part/two}, this completes our proof. 
\end{proof}
\section{Proof of Theorem \ref{thm:randmat}}\label{appendix:FixedPointCriterion}
\begin{proof}[Proof of Theorem \ref{thm:randmat}]
We write $\overline P = \overline U\overline \Sigma \overline V^\top$ for a singular value decomposition of $\overline P$, 
and write $\widehat P = \widehat U\widehat \Sigma \widehat V^\top$ for a singular value decomposition of $\widehat P$. 
Since $\overline P$ and $\widehat P$ are symmetric, $\overline U = \overline V$ and $\widehat U = \widehat V$.
Let $\xi := \overline P(\widehat P)^\top (\overline U \overline \Sigma^{-2} \overline U^\top)$,
and note that by Condition \ref{cond:ASEmeetsNMF},
\begin{align*}
e_n(\widehat W;\overline P) :=  \widehat W - \xi  \widehat W  = \widehat Y - \xi  \widehat Y =  \widehat U - \xi  \widehat U.
\end{align*}
Note that $\varepsilon_{n}(\widehat W;\overline P) = \|e_{n}(\widehat W;\overline P)\|_{F}$. Now, 
\begin{align*}
\xi 
=
\overline U\overline \Sigma \overline V^\top  \widehat V  \widehat \Sigma \widehat U^\top \overline U \overline \Sigma^{-2} \overline U^\top 
 =
\overline U\overline \Sigma (\overline V^\top (\widehat V  - \overline V) + I) \widehat \Sigma  (\overline U^\top (\widehat U -\overline U) +I)^\top \overline \Sigma^{-2} \overline U^\top.
\end{align*}
Then,  
\begin{align}
e_n(\widehat W;\overline P) 
& =  \widehat{U} - \overline U \overline \Sigma  \widehat\Sigma  \overline \Sigma^{-2} \overline U^\top \widehat U
- \overline U\overline \Sigma (\overline V^\top (\widehat V  - \overline V) ) \widehat \Sigma  (\overline U^\top (\widehat U -\overline U))^\top \overline \Sigma^{-2} \widehat U^\top  \nonumber \\
&\quad\quad - \overline U\overline \Sigma (\overline V^\top (\widehat V  - \overline V) ) \widehat \Sigma  \overline \Sigma^{-2}  \widehat U^\top 
- \overline U\overline \Sigma \widehat \Sigma  (\overline U^\top (\widehat U -\overline U))^\top \overline \Sigma^{-2} \widehat U^\top. \label{eqn:thmproof-inequality-1}
\end{align}
Also, specifically for the first two terms, we have 
\begin{align}
&\quad \widehat{U} - \overline U \widehat \Sigma  \overline \Sigma^{-1} \overline U^\top  \widehat U \\
&= \widehat U - \overline U +\overline U - \overline U \overline U^\top \widehat U + \overline U \overline U^\top \widehat U - \overline U \widehat \Sigma \overline  \Sigma^{-1} \overline U^\top \widehat U \nonumber \\
&= (\widehat U - \overline U)  +  \overline U \overline U^\top (\overline U - \widehat U) + \overline U (I - \widehat \Sigma  \overline \Sigma^{-1}) \overline U^\top \widehat U \nonumber\\
&= (I  -  \overline U \overline U^\top) (\widehat U - \overline U) + \overline U (I - \widehat \Sigma  \overline \Sigma^{-1}) \overline U^\top \widehat U. \label{eqn:thmproof-inequality-2}
\end{align}
Hence,
\begin{align}
 \|e_n(\widehat W;  \overline P) \|_F  \le 
&\|I  -  \overline U \overline U^\top\|_F\|\widehat U - \overline U\|_F +  r^{3/2} \|I - \widehat \Sigma  \overline \Sigma^{-1}\|_F  \nonumber \\
&
+  r^{2} \|\overline \Sigma \|_F   \|\widehat V  - \overline V \|_F \|\widehat \Sigma\|_F   \|\widehat U^\top -\overline U^\top\|_F  \|\overline \Sigma^{-2}\|_F \nonumber  \\
&
+ r^{3/2} \|\overline \Sigma \|_F \|\widehat V  - \overline V \|_F \|\widehat \Sigma\|_F \|\overline \Sigma^{-2}\|_F  \nonumber  \\
&
+ r^{3/2} \|\overline \Sigma\|_F \|\widehat \Sigma\|_F  \|\widehat U^\top - \overline U^\top \|_F \| \overline \Sigma^{-2}\|_F. \nonumber 
\end{align}
First, by Proposition 4.5 and 
Theorem 4.6 in \cite{AAetal-2013}, for each $\varepsilon > 0$, 
for all sufficiently large values of $n$,  with probability $1-\varepsilon$, 
\begin{align}
&\|\overline U - \widehat{U} \|_F \le 4 \delta_r^{-2}  \sqrt{2r \log(n/\varepsilon)/n}, \label{eqn:AAbound2}
\end{align}
Appealing Proposition 4.5 of \cite{AAetal-2013} once again, we also have that
\begin{align}
&\|\widehat \Sigma\|_F \| \overline \Sigma^{-1}\|_F 
\le 
r (\|\widehat \Sigma\|/ \|  \overline \Sigma\|)   \| \overline  \Sigma\| \| \overline  \Sigma^{-1}\|  
\le 
r (2/\delta_r) \xi_0,\\
&\| \overline \Sigma\|_F\|\widehat \Sigma\|_F  \| \overline \Sigma^{-2}\|_F 
\le  r^{3/2}  
\| \overline \Sigma\|^2  \| \overline  \Sigma^{-2}\|  (\|\widehat \Sigma\|/\| \overline \Sigma\|)
\le r^{3/2} \xi_0^2 2/\delta_r, 
\end{align}
where $\|\cdot\|$ denotes the spectral norm, i.e., the largest singular value of
the matrix. 
Therefore,
\begin{align*}
\|e_n(\widehat W;  \overline P)\|_F & \le 
(\sqrt{n}+{r}) \|\widehat U-\overline U\|_F + r^{3/2}  (\sqrt{r} + \| \widehat \Sigma\|_F\| \overline  \Sigma^{-1}\|_F)   \\
& + r^2 \| \overline \Sigma\|_F \|\widehat \Sigma\|_F\| \overline \Sigma^{-2}\|_F \|\widehat U- \overline U\|_F^2 \\
& +  r^{3/2} \|\widehat U - \overline U\|_F \|\widehat \Sigma\|_F\| \overline \Sigma\|_F  \| \overline  \Sigma^{-2}\|_F\\
& + r^{3/2}   \|\widehat U- \overline U\|_F \| \overline \Sigma\|\|\widehat \Sigma\|_F \| \overline \Sigma^{-2}\|_F.
\end{align*}
Therefore, we have the following inequality from which our claim follows:
\begin{align*}
\|e_n(\widehat W; \overline P)\|_F & \le 
(\sqrt{n}+{r}) \|\widehat U-\overline U\|_F
+ r^{3/2}  (\sqrt{r} + r (2/\delta_r)\xi_0)   \\
& + r^2   r^{3/2} \xi_0^2 2/\delta_r \|\widehat U- \overline U\|_F^2 \\
& + 2 r^{3/2} r^{3/2} \xi_0^2 2/\delta_r  \|\widehat U - \overline U\|_F.
\end{align*}
Since $\varepsilon >0$ were arbitrarily chosen, our claim follows from this.
\end{proof}
\section{Algorithm Listings}\label{sec:algorithms}
In Algorithm \ref{algo:mainalgorithm}, the symbol $\ISVT(M;r)$ denotes performing singular value thresholding on the matrix $M$ assuming that its rank is $r$.
In Algorithm \ref{algo:mainalgorithm}, the symbol $\NMF(M;r)$ denotes performing non-negative matrix factorization on the matrix $M$ assuming that its inner dimension is $r$.    We mention that in all of our experiments, 
to protect against the effect of the initial seed used for the underlying NMF algorithm, we have conducted 
\emph{multiple} runs of our clustering-of-graph procedure, and choose $r$ with the minimum AICc value. 
For our numerical experiments, $\ISVT$ is implemented so that singular value thresholding is iteratively performed until 
the outputs from two consecutive runs differ only by a small threshold value in $\|\cdot \|_{F}$.   
  \begin{algorithm}
    \caption{Clustering of graphs}
    \label{algo:mainalgorithm}
\begin{algorithmic}[1]
    \Require $X$, $r$
    \vskip0.1in    
    \Procedure {gclust}{$X$,$r$}
    \State $\widehat X \leftarrow \ISVT(X;r)$
    \State $\widehat X \leftarrow \widehat X \diag(\bm 1^\top \widehat X)^{-1}$
    \State $(\widehat W,\widehat H) \leftarrow \NMF(\widehat X;r)$
    \State \Return $\widehat W$, $\widehat H$
    \EndProcedure
\end{algorithmic}
\end{algorithm}
\begin{algorithm}
    \caption{Choosing the number of clusters for clustering of graphs}
    \label{algo:mainalgorithm2}
\begin{algorithmic}[1]
    \Require $X$
    \vskip0.1in    
    \Procedure{getGclustModelDim}{X}
    \For{$r \gets 1,\ldots, T$}
	\State $(\widehat W,\widehat H) \gets \textsc{gclust}(X,r)$
	\State $f(r)  \gets \textsc{AICc}(\widehat W, \widehat H, X,r)$
    \EndFor
    \State $\widehat r \gets \argmin_{r=1}^{T} f(r)$
    \State \Return $\widehat r$
    \EndProcedure
\end{algorithmic}
\end{algorithm}
\section{Additional Numerical Examples}\label{sec:more.num.expr}
\paragraph{Data with a ground truth}
As far as we know, there is no similar work that is directly comparable to ours.
As such, in our next examples, we apply our technique to some numerical examples that have
been considered for finding the inner dimension of non-negative matrix factorization on a matrix derived from images
for computer vision application.  In Example \ref{[exa:swimmer+visual.tfidf]}, the correct inner dimension is $16$,
and in Example \ref{[exa:mit-indoor-scene]}, the correct inner dimension is $3$. 
\begin{example}\label{[exa:swimmer+visual.tfidf]}
The swimmer data set is a frequently-tested data set for bench-marking 
NMF algorithms (c.f.~\cite{Donoho03whendoes} and \cite{NickGillis2014}). 
In our present notation, each column of $220\times 256$ 
data matrix $X$ 
is a vectorization of a binary image, and each row corresponds to 
a particular pixel.  
Each image is a binary images ($20$-by-$11$ pixels) 
of a body with four limbs which can be each in four different 
positions.  Technically speaking, the matrix $X$ is $16$-separable 
while the rank of $X$ is $13$.   This amounts to saying that $X$ represents a time-series 
of $16$ recurring motifs, and the rank of $X$ being $13$ is a nuisance fact.
Note that Condition \ref{cond:rank=innerdim} is violated.  
Nevertheless, application of our AICc criteria yields the estimated $\widehat r$ as $16$.
We mention that to protect against the effect of an initial seed used for the 
underlying NMF algorithm, we have used multiple runs of our clustering-of-graphs 
procedure, and choose $\widehat r$ with the smallest AICc value.   The AICc values are 
reported in Table \ref{tab:aic-swimmer}.
\hfill$\Box$
\end{example}
\begin{table}
\centering
\caption{AICc values for Example \ref{[exa:swimmer+visual.tfidf]}. The fact that there are $16$ image types 
coincides with the fact that the AICc value is minimized at $16$. }\label{tab:aic-swimmer}
\begin{tabular}{cccc}
$\widehat r$ & Loss &  Penalty &      AICc \\ \hline
   12 &     947.0524&  0.346153846& 947.3986\\
   13 &     901.5910&  0.483201589& 902.0742\\
   14 &     895.7349&  0.565097295& 896.3000\\
   15 &     865.9876&  0.748465296& 866.7361\\
   16 &     834.6471&  0.939686092& {\bf 835.5867}\\ 
   17 &     865.9512&  7.074993387& 873.0262\\ \hline
\end{tabular}
\end{table}
\begin{example}\label{[exa:mit-indoor-scene]}
We consider a $200\times 1425$ data matrix  $X$, each of whose columns is associated with 
an image and each of whose rows represents a visual feature.  Each column of $X$ is a representation of its 
associated image by way of a ``bag of visual words'' approach.  
Specifically, first, from each image,  one extracts a bag of SIFT-features, and then uses $K$-means clustering of 
a collection of bags of SIFT-features to obtain dimensionality reduction, yielding $200$ visual features.   
Each image corresponding to a column of $X$ can be attributed to $3$ types, ``bowling'', ``airport'', and ``bar''. 
Our AICc procedure yields that the AICc value is minimized at the inner dimension $\widehat r=3$. The 
AICc values are reported in Table \ref{tab:mit-aic}.
\hfill$\Box$
\end{example}
\begin{table}
\centering
\caption{AICc values for Example \ref{[exa:mit-indoor-scene]}. The fact that there are $3$ image types 
coincide with the fact that the AICc value is minimized at $3$. }\label{tab:mit-aic}
\begin{tabular}{cccc}
$\widehat r$ & Loss &  Penalty &      AICc \\ \hline
1&      6882.495& 0.003716505& 6882.499\\ 
2&      6788.166& 0.015255502& 6788.182\\
3&      6681.398& 0.034413464& {\bf 6681.432}\\ 
4&      6814.334& 0.073448070& 6814.407\\
5&      6792.356& 0.121808012& 6792.477\\
6&      6749.916& 0.157356882& 6750.073\\ \hline
\end{tabular}
\end{table}
\bibliographystyle{apalike}

\end{document}